\documentclass[journal]{IEEEtran}
%


%
\usepackage{makecell}
\usepackage{tabularx}
\usepackage{algorithm,listings}  
\usepackage{graphicx}
\usepackage{mathtools}
\usepackage[utf8]{inputenc} 
\usepackage[T1]{fontenc}    
\usepackage{hyperref}       
\usepackage{url}            
\usepackage{booktabs}       
\usepackage{amsfonts}       
\usepackage{nicefrac}       
\usepackage{microtype}      
\usepackage{xcolor}         
\usepackage{algpseudocode}
\usepackage{amsmath,amssymb,amsfonts,textcomp}
\usepackage{mathtools}
\usepackage{natbib}
\usepackage{dsfont}
\usepackage{amsthm}
\usepackage{listings}
\usepackage{threeparttable}
\usepackage{pgfplots}
\pgfplotsset{width=15cm, height=10cm, compat=1.9}

\newtheorem{lemma}{Lemma}
\DeclareGraphicsExtensions{.pdf,.PDF,.jpeg,.JPEG,.jpg,.JPEG,.png,.PNG}

\usepackage{fancyhdr}
\fancypagestyle{ieee_copyright}{
  \fancyhf{}
  
  \fancyfoot[C]{\footnotesize \parbox{\textwidth}{\centering \copyright~2024 IEEE. Personal use of this material is permitted. Permission from IEEE must be obtained for all other uses, in any current or future media, including reprinting/republishing this material for advertising or promotional purposes, creating new collective works, for resale or redistribution to servers or lists, or reuse of any copyrighted component of this work in other works.}}
}
%

%
\ifCLASSINFOpdf
\else
\fi
\hyphenation{op-tical net-works semi-conduc-tor}

\begin{document}
%
\title{Context-Enriched Contrastive Loss: Enhancing Presentation of Inherent Sample Connections in Contrastive Learning Framework}


\author{\IEEEauthorblockN{Haojin Deng,~\IEEEmembership{Student Member,~IEEE,}
Yimin Yang,~\IEEEmembership{Senior Member,~IEEE}}
\thanks{This work was supported in part by the Natural Sciences and Engineering Research Council of Canada (NSERC) Discovery Grant Program under Grant RGPIN-2020-04757, in part by the NSERC Alliance Program under Grant ALLRP 583061-23, and in part by the Western Research Award.}
\thanks{H. Deng is with the Department of Electrical and Computer Engineering, Western University, London, Canada.}  
\thanks{Y. Yang is with the Department of Electrical and Computer Engineering, Western University, London, Canada, and also with the Vector Institute, Toronto, Canada}
\thanks{The code for this paper is available at: \url{https://github.com/hdeng26/Contex}.}
}

\markboth{IEEE Transactions on Multimedia,~Vol.~27, pp.~429--441, December~2024}%
{Deng \MakeLowercase{\textit{and}} Yang: Context-Enriched Contrastive Loss}
%



\IEEEtitleabstractindextext{%
\begin{abstract}
Contrastive learning has gained popularity and pushes state-of-the-art performance across numerous large-scale benchmarks. In contrastive learning, the contrastive loss function plays a pivotal role in discerning similarities between samples through techniques such as rotation or cropping. However, this learning mechanism can also introduce information distortion from the augmented samples. This is because the trained model may develop a significant overreliance on information from samples with identical labels, while concurrently neglecting positive pairs that originate from the same initial image, especially in expansive datasets. This paper proposes a context-enriched contrastive loss function that concurrently improves learning effectiveness and addresses the information distortion by encompassing two convergence targets. The first component, which is notably sensitive to label contrast, differentiates between features of identical and distinct classes which boosts the contrastive training efficiency. Meanwhile, the second component draws closer the augmented samples from the same source image and distances all other samples, similar to self-supervised learning. We evaluate the proposed approach on image classification tasks, which are among the most widely accepted 8 recognition large-scale benchmark datasets: CIFAR10, CIFAR100, Caltech-101, Caltech-256, ImageNet, BiasedMNIST, UTKFace, and CelebA datasets.  The experimental results demonstrate that the proposed method achieves improvements over 16 state-of-the-art contrastive learning methods in terms of both generalization performance and learning convergence speed.  Interestingly, our technique stands out in addressing systematic distortion tasks. It demonstrates a 22.9\% improvement compared to original contrastive loss functions in the downstream BiasedMNIST dataset, highlighting its promise for more efficient and equitable downstream training. 
\end{abstract}

\begin{IEEEkeywords}
Self-supervised learning, Contrastive Loss function, Contrastive learning, Neural Networks, Learning Efficiency
\end{IEEEkeywords}}

\maketitle
\thispagestyle{ieee_copyright}

\IEEEdisplaynontitleabstractindextext

%
\IEEEpeerreviewmaketitle

\section{Introduction}
%
%
%
%
\IEEEPARstart{R}{ecent} advances in traditional supervised learning have been largely dependent on high-quality data samples, extensive human labeling, and large models~\cite{CNN,DeepLearning,resnet,Transformers}. This dependency has escalated resource demands, prompting a shift towards efficient model performance, mindful of carbon footprints~\cite{metaPOT} and reduced batch sizes~\cite{Simsiam}. Innovations aimed at reducing resource consumption include introducing non-iterative strategies for retraining neurons in fully connected layers~\cite{pinv} and transferring knowledge across different modalities~\cite{eeg_dcnet}, which both contribute significantly to training efficiency. Self-supervised learning (SSL) has emerged as a promising solution, generating its own 'labels' from training samples, offering robustness and efficient data utilization, especially for imperfect datasets. In this realm, contrastive self-supervised learning distinguishes itself by leveraging similarities and dissimilarities between data pairs. For instance, MoCo~\cite{MOCO} champions unsupervised representation in computer vision, outclassing many in supervised pretraining, while innovatively using a queue for negative sample features. In contrast, SimCLR~\cite{SimCLR} adopts a more straightforward approach, emphasizing larger batch sizes and introducing a nonlinear projection head post-encoder, which has been demonstrated to significantly enhance representation quality. Contrastive Language-Image Pre-Training (CLIP)~\cite{clip} and Dual-Stream Contrastive Learning~\cite{zeroshotdual} for Compositional Zero-Shot Recognition have further refined contrastive learning, addressing the challenges of robustness and generalization in multimodal and compositional contexts.

\begin{figure}[t]
  \centering
  \includegraphics[width=1\columnwidth]{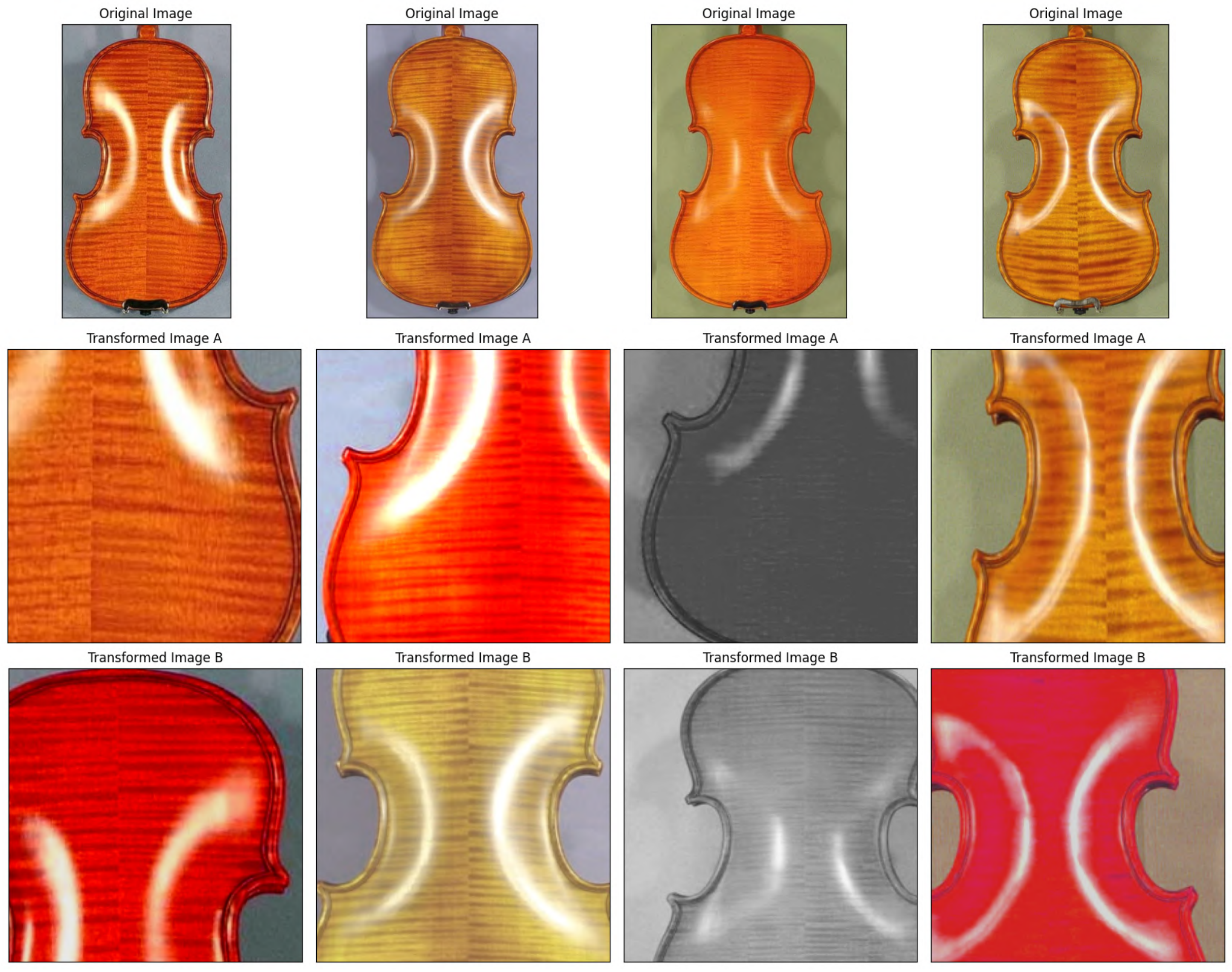}
  \caption{Random augmented images from ImageNet dataset. All these images are from the `n04536866' category. The images in the first row are the original images. The images in the same column are from the same source image.}
  \label{fig:mislead_aug}
\end{figure}

Building upon the principles of contrastive learning, metric distance representation learning has emerged as a pivotal approach in defining distances between data samples using labels~\cite{DM_large_margin, DM_Relative_Comparisons, rank_sim,NT-Xent,triplet_loss}. In particular, the method in~\cite{DM_large_margin} ensures that k-nearest neighbours belong to the same class, while the triplet loss~\cite{triplet_loss} differentiates between positive and negative samples. Innovations such as the loss of $(N +1)$-tuplet and the loss of $N$-pairs by~\cite{NT-Xent} have proven superior to traditional loss functions. 

Transitioning from these metric learning advancements, SupCon's supervised contrastive learning~\cite{supcon} integrates label information, enhancing robustness and outperforming frameworks like SimCLR~\cite{SimCLR}. Its potential is also seen in applications like personalized recommendation systems, where the Personalized Contrastive Loss (PCL) and Multisample-Based Contrastive Loss (MSCL) have demonstrated significant improvements in personalized representation and top-k recommendation tasks, respectively~\cite{pcl, mscl}. Despite its strengths, current supervised contrastive learning techniques still face challenges.

\begin{figure*}[!t]
  \centering
  \includegraphics[width=2\columnwidth]{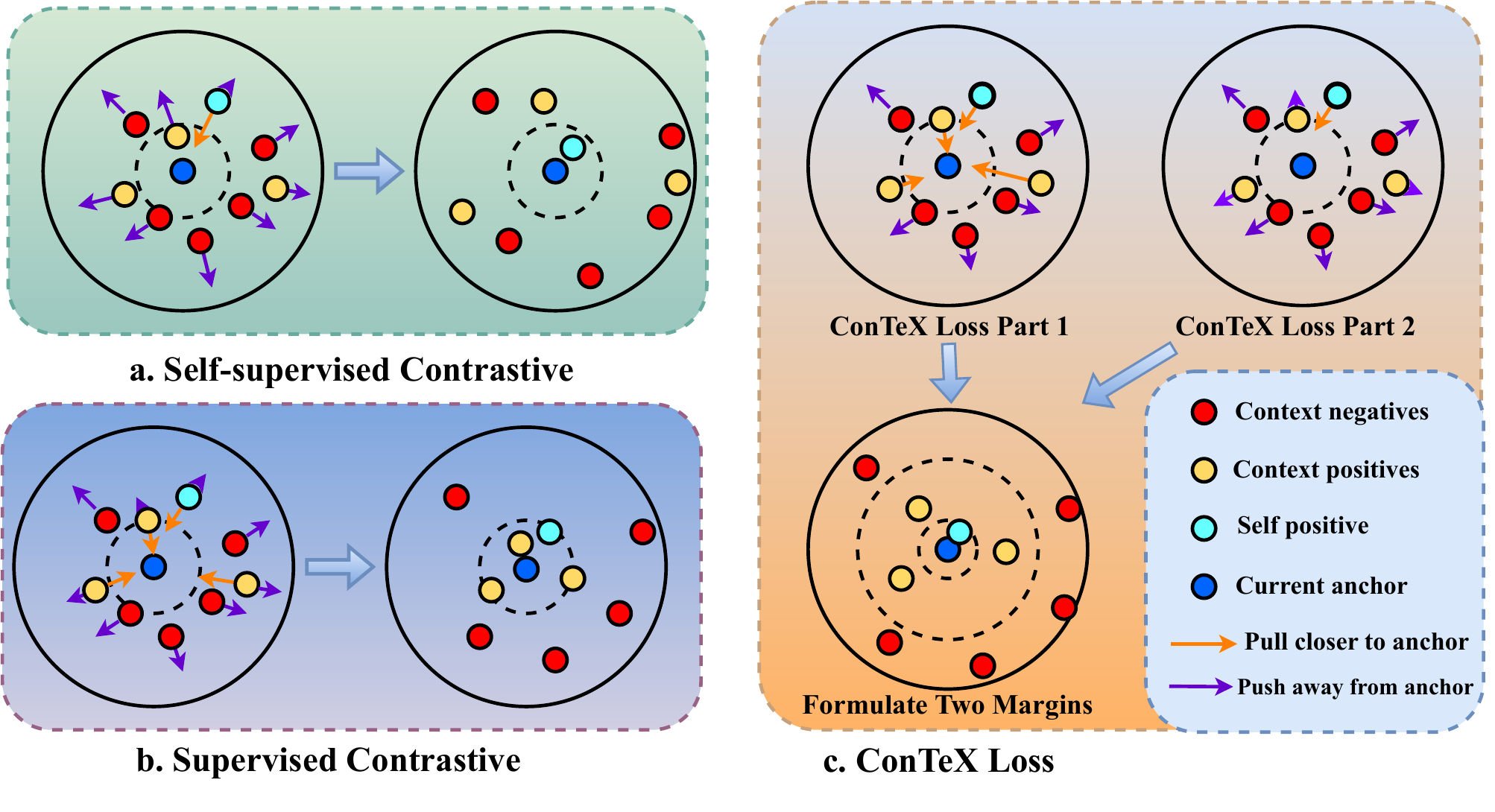}
  \caption{\textcolor{black}{Our proposed loss function vs previous contrastive loss functions in latent space. The \textbf{blue dot} is the current \textit{anchor}. The \textbf{red dots} are \textit{context negatives} (samples have different labels or context with \textit{anchor}). The \textbf{yellow dots} are \textit{context positives} (samples have the same label or similar context with \textit{anchor}). The \textbf{light cyan dot} is the \textit{self positive} (the augmented sample from the same original sample as the \textit{anchor}). The first part of our loss function is similar to (\textbf{b}), but we enhanced the contrastive between the \textbf{context positives} and \textbf{context negatives} (Eq.~\ref{loss part 1}). The second part of our loss function (Eq.~\ref{loss part 2}) has a similar proposal as (\textbf{a}). Our combined loss function created two stable boundaries to maximum similarity only from \textit{self positive} sample to avoid misleading by labels.}}
  \label{fig:loss}
\end{figure*}

The primary challenge in contrastive learning is the substantial number of epochs required for convergence during the pretraining stage, indicating untapped potential for enhancing the efficiency of the contrastive learning loss function. Previous supervised contrastive loss function~\cite{supcon} leverage context positives (samples belonging to the same label class) against self negatives (samples that are not augmented from the same image, denoted in Fig.~\ref{fig:loss}). Given that context-positive samples are inherently included within self-negatives, this setup inadvertently introduces an inherent tension in the loss function, thereby decelerating the convergence rate. The repercussions of this slow convergence are multifaceted: it not only increases computational costs and extends training times, but it may also compromise the adaptability of models in real-world scenarios where rapid model deployment is crucial.

On the other hand, systematic distortion due to incorrect assumptions that occurred during learning similarities still exists in contrastive learning frameworks. Motivated by supervised contrastive learning, various studies have integrated contrastive techniques into supervised learning, focusing on debiasing methods~\cite{chuang_debiased_2020, bahng_learning_2020, hong_unbiased_2021, barbano_unbiased_2022}. For instance, Chuang et al.~\cite{chuang_debiased_2020} introduced a debiased contrastive loss, while Bahng et al.~\cite{bahng_learning_2020} presented ReBias, a framework for debiased representation. Meanwhile, Li and Vasconcelos~\cite{hong_unbiased_2021} and \cite{barbano_unbiased_2022} introduced innovative loss functions and theoretical frameworks for debiasing. Yet, a shared challenge among these frameworks is their dependence on augmentations. Augmentation, combined with context-based positive samples, might inhibit learning inherent features, impacting generalization, especially in transfer tasks, as illustrated in Fig.~\ref{fig:mislead_aug}. Context-positive pairs from distinct original images might be more analogous than self-positive pairs, introducing potential bias as labeled samples guide network training. While many studies have explored supervised contrastive learning for bias mitigation~\cite{bahng_learning_2020,chuang_debiased_2020,hong_unbiased_2021,barbano_unbiased_2022}, the impact of `shortcuts' during augmentation, which can influence performance and training efficiency, remains under-explored.

In order to address the above two challenges, we introduce the Context-enriched Contrastive Loss Function (ConTeX) to enhance supervised contrastive learning. The abbreviation `ConTeX' is derived from the Latin word `contextus,' which means `woven together' or `context.' This name reflects our approach's emphasis on incorporating contextual information into the contrastive learning framework. Our loss function is bifurcated into two parts that not only mitigate the biases arising from context-positive samples and augmentations but also enhance the convergence speed during the pretraining phase.
The first part emphasizes discerning between similar-context samples (context positives in Fig.~\ref{fig:loss}) and different-context samples (context negatives in Fig.~\ref{fig:loss}). Unlike the SupCon loss, we've refined this segment to directly contrast positive and negative similarities, leading to faster convergence.
The second part focuses on the anchor and its `self positive', positioning all other samples as `self negatives'. This ensures the closest proximity between the anchor and its `self positive', bolstering generalization and curbing biases during training. 
Fig.~\ref{fig:loss} also delineates the distinctions between the ConTeX and traditional contrastive loss functions. Our modifications not only expedite the learning process but also enhance the model's robustness against potential biases. Overall, our contributions are listed as follows:
\begin{itemize}

\item Superior performance and faster convergence. We proposed a novel combination of two loss functions that utilized context-enriched contrastive information and avoid conflicts. Our proposed ConTeX loss provided an additional constraint to further restrict the distance of self-positive, context-positive and context-negative pairs which heightened sensitivity towards label contrasting, driving an accelerated convergence without escalation of computational complexity during the pretraining phase. This feature underscores the practicality of ConTeX in scenarios with limited computation resources.
\item Promoting fairness. We address an overlooked aspect in the field of supervised contrastive learning: the fairness information typically captured in unsupervised contrastive learning but somewhat diminished in supervised contrastive learning. Our focus extends beyond merely leveraging context information to include a comprehensive understanding of self-positive pair similarities. Consequently, our empirical results outshine those of SupCon, showcasing superior generalization capabilities, thereby reinforcing the effectiveness of our proposed approach.
\end{itemize}

\section{Methodology}

\begin{figure*}[h]
  \centering
  \includegraphics[width=2\columnwidth]{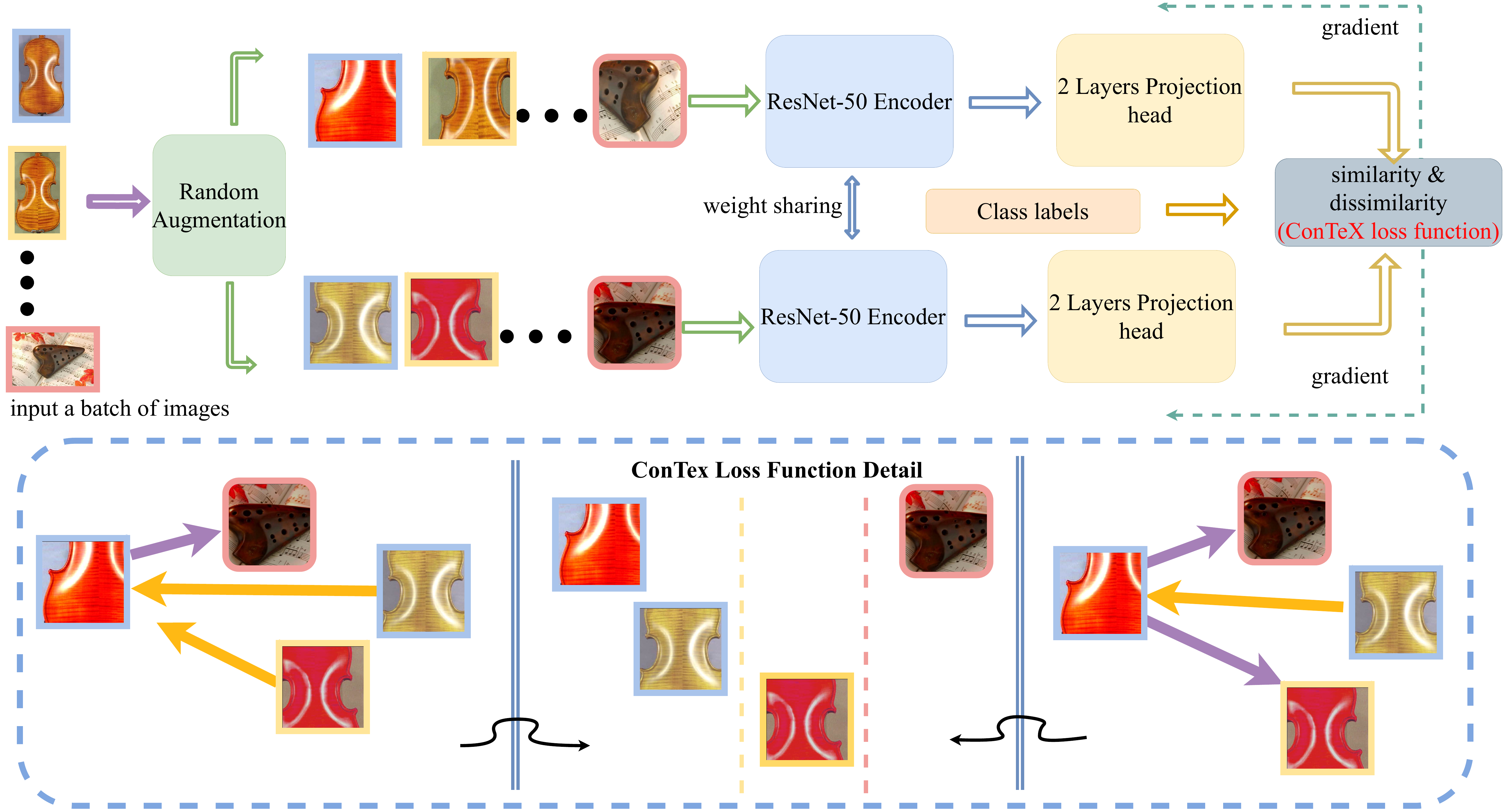}
  \caption{\textcolor{black}{The framework of our stage one training. The images with \textbf{light blue frame} are the \textit{anchor} and its \textit{self positive} (the augmented sample from the same original sample as the \textit{anchor}). The images with \textbf{red frame} are \textit{context negatives} (samples have different labels with \textit{anchor}). The images with \textbf{yellow frame} are \textit{context positives} (samples have the same label with \textit{anchor}). Both the images with the light cyan frame and yellow frame belong to the `Violin' class and the images with the red frame are from the `Ocarina' class. In the linear evaluation and fine-tuning stage, we only use the ResNet-50 encoder from stage one and add a linear fully connected layer as a classifier. The output dimension of ResNet-50 is 2048, and each layer of the projection head block contains 128 neurons.}}
  \label{fig:framework}
\end{figure*}

\subsection{Preliminary}
To elucidate our approach, we extend data samples into the four distinct groups from the original two distinct groups in the existing methods. Fig.~\ref{fig:loss} and~\ref{fig:framework} illustrate the relationships among these four groups.
\begin{itemize}
    \item \textbf{Self Positive}: each image undergoes two augmentations. For any given `anchor' —an augmented sample acting as a reference in the latent space—there's a single `self positive', which is its paired augmentation from the same original image.

    \item \textbf{Self Negatives}: these are all other samples derived from different original images than the anchor.

    \item \textbf{Context Positives}: samples sharing the same context class as the anchor.

    \item \textbf{Context Negatives}: samples from different context classes than the anchor.
\end{itemize}
Our proposed loss function is designed to seamlessly integrate with established contrastive learning frameworks~\cite{SimCLR,supcon}. To facilitate a clear comparison between our proposed ConTeX loss function and the previous works, we adopt similar notations from SupCon~\cite{supcon}. The input data samples, represented as (data, label) pairs, are denoted as $\{x_k,y_k\}$ where $k=1\dots N$. \textcolor{black}{We adopted the data augmentation strategy from \cite{SimCLR}, where each original image is transformed into two distinct views to enhance the model's robustness and generalization capabilities.} Let $i\in I$ where $I \equiv \{1\dots2N\}$ represents the index of augmented samples in a batch of size $N$. For each augmented data pair, we represent as $\{\tilde{x}_j,\tilde{y}_j\}$ where $j=1\dots 2N$. Both the $\tilde{x}_{j-1}$ and $\tilde{x}_j$ are the augmented samples of $x_{(j-1)/2}$, therefore $\tilde{y}_{j-1}$ and $\tilde{y}_j$ have the same labels. 

Mathematically, the augmentation function $t$ generate two different augmented images $x^\sim_q$ and $x^\sim_k$ by:
\begin{equation}
{\tilde{x}_q = t(x), \tilde{x}_k = t'(x),}
\end{equation} 
\textcolor{black}{where $t$ and $t'$ are two independently and randomly generated augmentation methods. The details of these augmentations are provided in Sec.~\ref{sec:exp}.}
Subsequently, these augmented images are encoded using a weight-sharing encoder $E$ to generate representations $h_q$ and $h_k$:
\begin{equation}
{h_q = E(\tilde{x}_q), h_k = E(\tilde{x}_k).}
\end{equation}

We include the projection head in the design to mitigate the risk of the encoder learning irrelevant features under the influence of the contrastive loss function especially when the framework is used to establish similarities between variations of the same image, $x^\sim_q$ and $x^\sim_k$. This projection head absorbs these superfluous features and can be conveniently removed for downstream tasks. In our framework, this weight-sharing projection head is denoted as $Proj$ and is configured as a 2-layer Multilayer Perceptron (MLP) that incorporates a Rectified Linear Unit (ReLU) as the activation function:
\begin{equation}
{z_q = Proj(h_q), z_k = Proj(h_k).}
\end{equation}

In the final step, the objective is to minimize the contrastive loss function $\mathcal{L}$. 

\subsubsection{Self-supervised contrastive loss function}
In most recent self-supervised learning studies, the contrastive loss function is:
\begin{equation}\label{SimCLR loss}
{
\mathcal{L}^{self} = -\sum_{i\in{I}}\log\frac{exp(z_{i}	\cdot z_{P_{s}(i)}/\tau)}{\sum_{a\in{A(i)}}\exp(z_{i} \cdot z_{a}/\tau)}.
}
\end{equation}
In this equation, we use $\cdot$ as the dot product and $\tau$ as temperature. The purpose of this loss function is to approach $max(sim(z_{i}, z_{P_{s}(i)}))$ and $min(\sum_{a\in{A(i)}}sim(z_{i}, z_{a}))$. We use $A(i) \equiv I \textbackslash i$ to represent all the other augmented samples corresponding to index $i$. $P_s(i)$ represents the index of \textit{self positive} to $i$ (augmented from the same image).

\subsubsection{Supervised contrastive loss function}
To leverage the context information in contrastive learning,~\cite{supcon} proposed a supervised contrastive learning loss function without cross-entropy loss:
\begin{equation}\label{SupCon loss}
{
\mathcal{L}^{sup} = \sum_{i\in{I}}\frac{-1}{|P_l(i)|}\sum_{p\in{P_l(i)}}\log\frac{exp(z_{i}\cdot z_{p}/\tau)}{\sum_{a\in{A(i)}}\exp(z_{i}\cdot z_{a}/\tau)},
}
\end{equation}
where $P_l(i) \equiv p \in \{A(i): \tilde{y}_p=\tilde{y}_i\}$ denotes the index of \textit{context positives} with the same label as $i$. This loss function utilizes label information to maximize the similarity between augmented images that share the same label. The supervised loss function aims to achieve $max(sim(z_{i}, z_{P_{l}(i)}))$ using label information while also trying to reach $min(\sum_{a\in{A(i)}}sim(z_{i}, z_{a}))$. It is important to note that the \textit{context positives} are also incorporated in the minimizing term to maintain the stability of the loss function. 

\subsection{ConTeX loss function}
Inspired by the supervised loss function, we proposed a context-enriched contrastive loss function. This loss function comes with two parts. The first part learns from \textit{context positives} with higher sensitivity on contrastive learning. The second part learns from \textit{self positive}.

\begin{equation}\label{loss part 1}
{
\textcolor{black}{
\mathcal{L}_{a} = \sum_{i\in{I}}\frac{-1}{|P_l(i)|}\sum_{p\in{P_l(i)}}\log(\frac{\exp(z_{i}\cdot z_{p}/\tau)}{\sum_{n_2\in{N_l(i)}}\exp(z_{i}\cdot z_{n_2}/\tau)}).
}
}
\end{equation}

In the denominator, $N_l(i) \equiv n_2 \in \{A(i)\textbackslash P_l(i) \}$ represents the index of \textit{context negatives} with labels different from $i$. \textcolor{black}{To fully leverage the potential information from the context, we have chosen to exclude the \textit{context positives} from the minimizing term (the denominator).} This approach allows for more effective utilization of context information in the learning process by directly contrasting the similarity of the current sample with \textit{context positives} and \textit{context negatives}. 
\begin{displaymath}
{max(sim(z_{i}, z_{P_{l}(i)})),\; min(\sum_{n_2\in{N_l(i)}}sim(z_{i}, z_{n_2})).}
\end{displaymath} 

On the other hand, we need to facilitate \textit{self positive} in distinguishing \textit{context positives} to improve the generalizability of our loss function especially in transfer learning.

\begin{equation}\label{loss part 2}
{
\mathcal{L}_{b}  = -\sum_{i\in{I}}\log(1+\frac{\exp(z_{i}\cdot z_{P_s(i)}/\tau)}{\sum_{n_1\in{N_s(i)}}\exp(z_{i}\cdot z_{n_1}/\tau)}),
}
\end{equation}
where $N_s(i)$ is the \textit{self negatives} to $i$ where $N_s(i) \equiv n_1 \in I \textbackslash \{i, P_s(i)\}$.
Inspired by (N+1)-Tuplet Loss~\cite{NT-Xent}, we use the \textit{self positive} as the only positive pair and all the other augmented samples as negative pairs. In latent space, we aimed to keep the \textit{self positive} as the most similar one across the whole batch by

\begin{displaymath}
{max(sim(z_{i}, z_{P_{s}(i)})),\; min(\sum_{n_1\in{N_s(i)}}sim(z_{i}, z_{n_1})).}
\end{displaymath} 

To better reduce the conflict between context-similarity and self-similarity in the combination loss function, we introduce a weight parameter $\lambda$. The ConTeX loss function could be represented as:

\begin{equation}\label{Proposed loss hard}
{
\textcolor{black}{
\begin{aligned}
&\mathcal{L}^{ConTeX} = \\
&\sum_{i\in{I}}\left(\lambda\frac{-1}{|P_l(i)|}\sum_{p\in{P_l(i)}}(\log(\frac{\exp(z_{i}\cdot z_{p}/\tau)}{\sum_{n_2\in{N_l(i)}}\exp(z_{i}\cdot z_{n_2}/\tau)})\right.\\
&\left.-(1-\lambda)\log(1+\frac{\exp(z_{i}\cdot z_{P_s(i)}/\tau)}{\sum_{n_1\in{N_s(i)}}\exp(z_{i}\cdot z_{n_1}/\tau)})\right).
\end{aligned}
}
}
\end{equation}

\subsection{Upper Bound Analysis of the Loss Function}

SimCLR proposed NT-Xent as their contrastive loss function. NT-Xent used one pair of samples as positive and the rest samples from batch size as negative. The advantage is this loss function could benefit from a larger batch size because it would have more negative samples to contrast. However, the efficiency is acceptable but not promising enough because the loss function could arrange all same downstream-label samples as negative and extend the distance on latent space. To address the following issue, we systematically analyzed the upper bound of the NT-Xent loss function and part 1 of our proposed loss function (Eq.~\ref{loss part 1}). We hypothesize that our proposed loss function could benefit from multi-positive pairs and hence gain more information in larger batch size compared to SimCLR.

\begin{lemma}\label{lemma batch}
At least two positive pairs correspond to one representation in a mini-batch exist if the batch size $n$ is larger than the number of label classes $m$.
\end{lemma}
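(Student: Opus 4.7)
The plan is to prove Lemma~\ref{lemma batch} by a direct application of the pigeonhole principle to the original (pre-augmentation) batch, then lift the conclusion to the augmented index set $I$ using the augmentation pairing already introduced in the preliminaries.

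First I would fix the setup. Each mini-batch contains $n$ original samples $\{x_1,\dots,x_n\}$ with labels drawn from an $m$-element class set, and the augmentation scheme of the paper produces $2n$ augmented views indexed by $I \equiv \{1,\dots,2n\}$, with the pairing $P_s$ identifying, for every view, its unique self positive. A ``positive pair'' associated with an anchor $i$ is then either the self-positive pair $(i, P_s(i))$ or a context-positive pair $(i,p)$ with $p \in P_l(i)$.

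Next I would invoke the pigeonhole principle on labels. Since $n > m$, there must exist two distinct original indices $a \neq b$ with $y_a = y_b$. Choose an augmented view $\tilde{x}_i$ coming from $x_a$ as the anchor. By construction, its self positive $\tilde{x}_{P_s(i)}$ (the other augmentation of $x_a$) yields one positive pair. Moreover, both augmented views of $x_b$ carry the label $y_b = y_a = \tilde{y}_i$, and neither equals $i$ nor $P_s(i)$, so they lie in $P_l(i) \subseteq A(i)$. Any one of them therefore produces a second, distinct positive pair with the anchor, establishing the existence claim.

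I would close by remarking that the argument actually yields at least three positive pairs for the chosen representation (one self-positive plus both views of $x_b$), so the ``at least two'' statement of the lemma is a weak form of the true bound; this stronger count is what will be used when contrasting against the single-positive-pair NT-Xent behavior in the subsequent upper-bound analysis. The main and essentially only obstacle is notational: being careful that $P_s(i) \notin P_l(i)$ under the paper's definitions (self positive is excluded from $A(i)\setminus\{P_s(i)\}$ or is a separate index class), so that the self-positive pair and the context-positive pair are genuinely counted as two different pairs rather than being conflated. Once the definitions are stated precisely, no calculation is required.
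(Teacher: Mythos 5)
Your proof is correct and takes essentially the same route as the paper's: a pigeonhole argument on the $m$ label classes forces two distinct original samples to share a label once $n>m$, which yields both a self-positive pair and a context-positive pair for the same anchor. The paper's own proof is terser but identical in substance (it exhibits exactly the pairs $(x_{a_q},x_{a_k})$ and $(x_{a_q},x_{b_k})$); your additional observation that the count is actually at least three is a harmless strengthening.
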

\begin{proof}
If the batch size $n=m$, the worst case is that all labels $L_1,\dots L_m$ exist in one sample. However, if $n=m+1$ one subset in $\{L_1,\dots L_m\}$ must exists more than one sample which means in at least one subset $L_i$ there exists two positive pairs $\{(x_{a_q}, x_{a_k}),(x_{a_q}, x_{b_k})\}$ correspond to a single representation.
\end{proof}

\subsubsection{Upper Bound}
According to Lemma~\ref{lemma batch}, we will analyze the upper bound of our proposed part 1 loss function (context-based loss function) when the mini-batch size is larger than the number of label classes. Based on our loss function Eq.~\ref{loss part 1}, the total loss for all samples in batch size $n$ could be present as:

\begin{equation}\label{fullbatchloss}
\begin{aligned}
\mathcal{L} = &-\frac{1}{n}\sum^n_a\log\frac{\sum_{c\in L_i} \exp({ sim(z_{a}, z_{c})/\tau})}{\sum^{2n}_b \mathds{1}_{[b \notin L_i]}\exp({sim(z_{a}, z_{b})/\tau})}\\
&\ \ a\in L_i, i \in m,
\end{aligned}
\end{equation} 
where $L_i$ represents one of the label sets in $m$ classes that always contains the index of the same label samples as $a$. Based on the logarithmic rules we can rewrite Eq.~\ref{fullbatchloss} to 
\begin{equation}
\begin{aligned}
\mathcal{L} = & -\frac{1}{n}\sum^n_a\log(\sum_{c\in L_i} \exp({ sim(z_{a}, z_{c})/\tau}))\\
& +\frac{1}{n}\sum^n_a\log(\sum^{2n}_b \mathds{1}_{[b \notin L_i]}\exp({sim(z_{a}, z_{b})/\tau}))\\ 
&\ \ a\in L_i, i \in m,
\end{aligned}
\end{equation} 
where $x_{a,b}$ could be denoted as $sim(z_a, z_b)/\tau$. Then we can rewrite the loss function as 
\begin{equation}
\begin{aligned}
\mathcal{L} = & -\frac{1}{n}\sum^n_a\log(\sum_{c\in L_i} \exp({x_{a,c}}))\\
& +\frac{1}{n}\sum^n_a\log(\sum^{2n}_b \mathds{1}_{[b \notin L_i]}\exp({x_{a,b}})) \ \ a\in L_i, i \in m.
\end{aligned}
\end{equation}

Since there exists at least one positive pair of sample, and \textcolor{black}{only $N$ negative pairs} ($\mathds{1}_{[b \notin L_i]}$) exist in the second term in the loss function
\begin{equation}
\begin{aligned}
\log(\sum^{2n}_b \mathds{1}_{[b \notin L_i]}\exp({x_{a,b}})) < LSE(x_{a,1},\dots, x_{a,N})
\end{aligned}
\end{equation} 

\textcolor{black}{LogSumExp(LSE) term can be define as:}
\begin{equation}\label{LSE}
\begin{aligned}
\textcolor{black}{LSE(x_1,\dots, x_n) = \log(\exp(x_1)+\dots+ \exp(x_n)).}
\end{aligned}
\end{equation} 

Based on the properties of the LogSumExp function
\begin{equation}
\begin{aligned}
max\{x_1, \dots,x_n\}&\leq LSE(x_1,\dots, x_n)\\
&\leq max\{x_1, \dots,x_n\}+\log(n),
\end{aligned}
\end{equation}

we can simplify the upper bound of our proposed loss function by
\begin{equation}\label{long proof}
\begin{aligned}
\mathcal{L} < & -\frac{1}{n}\sum^n_a{LSE(x_{a,L_{i_1}},\dots, x_{a,L_{i_l}})}\\
& +\frac{1}{n}\sum^n_a{LSE(x_{a,1},\dots, x_{a,N})} \ \ a\in L_i, i \in m, \\
< & -\frac{1}{n}\sum^n_a(max(x_{a,L_{i_1}},\dots, x_{a,L_{i_l}})+\log(|L_i|))\\
& +\frac{1}{n}\sum^n_a(max(x_{a,1},\dots, x_{a,N})+\log(N)) \ \ a\in L_i, i \in m, \\
= & -\frac{1}{n}\sum^n_a(max(x_{a,L_{i_1}},\dots, x_{a,L_{i_l}}))-\log(|L_i|)\\
& +\frac{1}{n}\sum^n_a(max(x_{a,1},\dots, x_{a,N}))+\log(N) \ \ a\in L_i, i \in m,
\end{aligned}
\end{equation} 
where $l = |L_i|$ is the number of postive pairs related to $x_a$. Based on Agren's study~\cite{Nt-Xent_Upper_Bound}, the upper bound of the NT-Xent loss function could be represent as
\begin{equation}
\begin{aligned}
\mathcal{L}^{NT-Xent}=&-\frac{1}{n}\sum_{i,j \in MB}x_{i,j}+\log(N)\\
&+\frac{1}{n}\sum^n_i max(x_{i,1},\dots,x_{i,N}),
\end{aligned}
\end{equation}
where $MB$ refers to the current mini-batch of data and $x_{i,j}$ represents the only positive pair in the NT-Xent. We will compare our loss function expression with the NT-Xent term by term. In the first term of Eq.~\ref{long proof}, $max(x_{a,L_{i_1}},\dots, x_{a,L_{i_l}})$ already included the only positive pair from NT-Xent $x_{i,j}$. Therefore,
\begin{equation}\label{term1}
\begin{aligned}
max(x_{a,L_{i_1}},&\dots, x_{a,L_{i_l}}) \geq x_{i,j}, \\
-\frac{1}{n}\sum^n_a(max(x_{a,L_{i_1}},&\dots, x_{a,L_{i_l}})) \leq -\frac{1}{n}\sum_{i,j \in MB}x_{i,j}.
\end{aligned}
\end{equation}
Furthermore, since the number of positive pairs in the set $L_i$ must be positive. Based on Lemma~\ref{lemma batch}, when there exists more than one positive pair, $|L_i|>0$ and
\begin{equation}\label{term2}
\begin{aligned}
-\log(|L_i|) +\frac{1}{n}\sum^n_a(max(x_{a,1},&\dots, x_{a,N}))+\log(N) \\
\leq \log(N)+ \frac{1}{n}\sum^n_i max(x_{i,1},&\dots,x_{i,N}).
\end{aligned}
\end{equation}

Based on the two parts of proof (Eq.~\ref{term1}, \ref{term2}), our loss function has a lower upper bound compared to the NT-Xent if and only if there exists more than one positive sample pair($|L_i|$) in the current mini batch. More information is gained in contrast if there are more positive pairs in a mini-batch. Based on Eq.~\ref{term1}, if there exists a pair $x_{a,L_i}$ larger than the original positive pair $x_{i,j}$ in the NT-Xent, the upper bound could be further reduced.
\begin{color}{black}
\subsection{Theoretical Justifications}
This part will discuss benefits from the additional margin in ConTeX loss function. The gradient of the ConTeX loss \textit{part a} with respect to $z_i$ is given by:

\begin{equation}
\begin{aligned}
&\frac{\partial \mathcal{L}^{ConTeX}_{a}}{\partial z_i} = \\
&\frac{-1}{|P(i)|} \sum_{p \in P_{l}(i)} \frac{\partial}{\partial z_i} 
\left\{ \frac{z_i \cdot z_p}{\tau} - \log \sum_{n_2 \in N_{l}(i)} \exp(z_i \cdot z_{n_2} / \tau) \right\}
\end{aligned}
\end{equation}

\begin{equation}
= \frac{-1}{\tau |P_{l}(i)|} \sum_{p \in P_{l}(i)} \left\{ z_p - \frac{\sum_{n_2 \in N_{l}(i)} z_{n_2} \exp(z_i \cdot z_{n_2} / \tau)}{\sum_{n_2 \in N_{l}(i)} \exp(z_i \cdot z_{n_2} / \tau)} \right\}.
\end{equation}

Rewriting the expression in terms of $X_{in}$:

\begin{equation}
= \frac{-1}{\tau |P_{l}(i)|} \sum_{p \in P_{l}(i)} \left\{ z_p - \sum_{n_2 \in N_{l}(i)} z_{n_2} X_{in_2} \right\}
\end{equation}
\begin{equation}
\label{eq:g_contex_p1}
= -\frac{1}{\tau} \left\{ \bar{z}_{p_l} - \sum_{n_2 \in N_{l}(i)} z_{n_2} X_{in_2} \right\}
\end{equation}
where
\begin{equation}
\quad X_{in} \equiv \frac{\exp(z_i \cdot z_{n} / \tau)}{\sum_{n \in N(i)} \exp(z_i \cdot z_{n} / \tau)}
\end{equation}
 represents the likelihood of the negative sample \( z_n \) given the anchor \( z_i \) relative to all other negative samples. In our specific formulation, this represents the probability of selecting the current context-specific negative sample over all other context-specific negatives. Additionally, $\bar{z}_{p_l}$ signifies the average of all \textit{context positive} representations.

The gradient of the ConTeX loss \textit{part b} with respect to $z_i$ is given by:

\begin{equation}
\begin{aligned}
&\frac{\partial \mathcal{L}^{ConTeX}_{b}}{\partial z_i} = -\frac{\partial}{\partial z_i} 
\left\{ \frac{z_i \cdot z_{P_s(i)}}{\tau} - \log \sum_{a \in A(i)} \exp(z_i \cdot z_a / \tau) \right\}
\end{aligned}
\end{equation}
\begin{equation}
\label{eq:g_contex_p2}
= -\frac{1}{\tau} \left\{ z_{p_s} - z_{P_s(i)}P_{ips} - \sum_{n_1 \in N_{s}(i)} z_{n_1} P_{in_1} \right\}
\end{equation}
where
\begin{equation}
\quad P_{in} \equiv \frac{\exp(z_i \cdot z_n / \tau)}{\sum_{a \in A(i)} \exp(z_i \cdot z_a / \tau)}
\end{equation}
\( P_{in} \) represents the likelihood of the negative sample \( z_n \) given the anchor \( z_i \) relative to all other samples in the set \( A(i) \). In Eq.~\ref{eq:g_contex_p2}, $P_{ips}$ and $P_{in_1}$ represents the likelihood of \textit{self positive} and \textit{self negative} respectively. To simplify the calculation, we set $\lambda = 0.5$, combining two part together we have

\begin{equation}
\begin{aligned}
&\frac{\partial \mathcal{L}^{ConTeX}}{\partial z_i} = 
\frac{1}{\tau} \left\{ z_{P_s(i)}P_{ips} + \sum_{n_1 \in N_{s}(i)} z_{n_1} P_{in_1} \right.\\
&\left.+ \sum_{n_2 \in N_{l}(i)} z_{n_2} X_{in_2}
- z_{p_s} - \bar{z}_{p_l} \right\}.
\end{aligned}
\end{equation}

Given the gradient of SupCon\cite{supcon} loss function is 

\begin{equation}
\begin{aligned}
&\frac{\partial \mathcal{L}^{SupCon}}{\partial z_i} = 
\frac{1}{\tau} \left\{ z_{P_l(i)}P_{ipl} + \sum_{n_2 \in N_{l}(i)} z_{n_2} P_{in_2} - \bar{z}_{p_l} \right\}.
\end{aligned}
\end{equation}

ConTex loss explicitly incorporates different types of interactions within its loss calculation compared to SupCon Loss. Therefore, it potentially establishes a more significant margin between classes by emphasizing both intra-class compactness and inter-class separability through its dual consideration of self and contextual relationships. Specifically, the ConTeX loss includes not only context-based positives and negatives (similar to SupCon) but also integrates self-positives and self-negatives. This addition enhances the contrast between self-positives and self-negatives, reinforcing negative constraints within each class that likely enhance the discriminative power of the model. Essentially, this creates a richness of more learnable features, allowing the model to learn more content and knowledge. By managing a variety of sample relationships and ensuring that the model does not overly focus on any single type of sample interaction, ConTeX is likely more robust against overfitting, adapting better to new, unseen, and biased data compared to SupCon. Experimentation and empirical validation from Table.~\ref{BMNIST} and Table.~\ref{transfer_imagenet32} also substantiate these theoretical advantages in specific tasks.
\end{color}
\section{Experiments}
\label{sec:exp}
\subsection{Experimental Setup}
In this section, we detail our experimental environment, which includes the data sets used, the methods compared to and the specifics of the training framework.
Our loss function, as presented in Eq.~\ref{Proposed loss hard}, is evaluated against two categories of methods: non-bias methods and bias methods.

\paragraph{General Classification Experiments}
For general image classification, we benchmarked our loss function against established methods such as SimCLR~\cite{SimCLR}, Max Margin~\cite{LargeMargin}, Cross Entropy, and SupCon~\cite{supcon}. The datasets utilized for this purpose include CIFAR-10, CIFAR-100~\cite{cifar}, and ImageNet~\cite{imagenet}. Furthermore, we assessed transfer learning capabilities on Caltech-101~\cite{caltech101} and Caltech-256~\cite{caltech256}, particularly contrasting our method with the SupCon loss.

\paragraph{Fairness-oriented Methods}
When addressing fairness enhancement, our evaluations were conducted on datasets like BiasedMNIST~\cite{bmnist}, UTKFace~\cite{utkface}, and CelebA~\cite{cleleba}, adopting settings akin to the work by Hong et al.~\cite{hong_unbiased_2021}. In scenarios necessitating bias labels, we drew comparisons with methods such as Vanilla network, LNL~\cite{lnl}, DI~\cite{di}, EnD~\cite{end}, BiasBal~\cite{hong_unbiased_2021}, and BiasCon\cite{hong_unbiased_2021}. Alternatively, in label-independent contexts, we compared against Vanilla network, LM~\cite{LM}, RUBi~\cite{rubi}, ReBias~\cite{bahng_learning_2020}, LfF~\cite{lff}, and SoftCon~\cite{hong_unbiased_2021}.

\paragraph{Training Framework}
For training, each input batch comprises images subjected to dual augmentations. These doubly-augmented images are processed through our ResNet encoder that is initialized with random weights~\cite{resnet}. This encoder yields embeddings that are subsequently forwarded to a projection block, producing two batches of 128-dimensional latent space features. This projection block is discarded during linear evaluation or fine-tuning phases. 

\paragraph{Augmentation Mechanism}
\textcolor{black}{In our experiment environment, we employed a comprehensive data augmentation pipeline to enhance the training process for our contrastive learning models. The pipeline included randomly resizing and cropping images to varying scales between 20\% and 100\% of the original size, followed by a random horizontal flip to introduce mirrored variations. We applied color jitter with a probability of 80\%, adjusting brightness, contrast, saturation, and hue to create diverse visual conditions. Additionally, a 20\% probability of converting images to grayscale was included to further diversify the dataset. Finally, the images were converted to tensors and normalized, ensuring consistency and readiness for model input. These augmentations collectively improved the robustness and generalization of our models by providing varied and enriched training data.}

Our proposed ConTeX loss function computes the loss based on the latent space features from both batches, after which a back-propagation step updates the model weights. Post-training, the encoder layers are fixed, and a linear classification layer is trained using the ground-truth labels to evaluate our framework's performance in image classification tasks.

\subsection{Performance comparison}
\begin{table}[h]
\centering
\begin{threeparttable}
  \caption{Top-1 linear evaluation accuracy on ResNet-50~\cite{resnet}. Results with \textbf{*} are the accuracy that we re-implemented.}
  \label{top 1}
  \centering
  \begin{tabular}{cccc}
    \toprule
    
    Methods &  CIFAR10 & CIFAR100 & ImageNet\\
    \hline
    SimCLR~\cite{SimCLR} &  93.6 & 70.7 & 70.2\\
    Max Margin~\cite{LargeMargin} & 92.4 & 70.5 & 78.0\\
    Cross Entropy & 95.0* & 72.3* & 77.8*\\
    SupCon~\cite{supcon} &95.3*&74.8*&78.1*\\
    Ours & \textbf{95.9}&\textbf{75.8}&\textbf{78.4}\\
    
    \bottomrule
  \end{tabular}
  \end{threeparttable}
\end{table}
\subsubsection{Performance on ImageNet, CIFAR10, CIFAR100}
Table.~\ref{top 1} reflects we have compatible performance on image classification compared to SupCon~\cite{supcon}. Our proposed loss function outperformed max-margin, and cross-entropy on CIFAR10, CIFAR100 and  ImageNet ILSVRC-2012~\cite{imagenet2012} datasets. We pretrained the network and used linear evaluation to get the top-1 accuracy. 

\begin{figure}[h]
  \centering
  \includegraphics[width=\columnwidth]{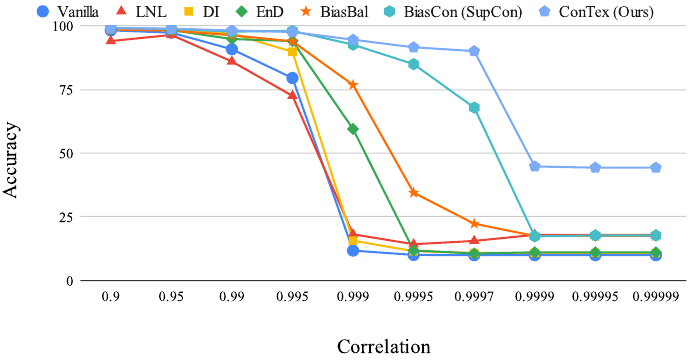}
  \caption{Comparison of unbiased accuracy on the BiasedMNIST~\cite{bmnist} dataset with high target-bias correlations.}
  \label{fig:bias_label_BMNIST}
\end{figure}

\begin{table*}[h]
\centering
\begin{threeparttable}
    
  \caption{Unbiased accuracy (std) on the BiasedMNIST~\cite{bmnist} dataset with high target-bias correlations on training stage. Results with \textbf{*} are the accuracy that we re-implemented. Others are based on study~\cite{hong_unbiased_2021}. Results not significantly worse than the best (p > 0.05, permutation test) are shown in bold.}
  \label{BMNIST}
  \centering
  \begin{tabular}{cccccccc}
    \toprule
    Correlation$^a$& Vanilla& LNL~\cite{lnl}& DI~\cite{di}& EnD~\cite{end} & BiasBal~\cite{hong_unbiased_2021} & BiasCon~\cite{hong_unbiased_2021} & \textbf{Ours} \\
    \midrule
    0.9999 & *10.0(0.00) & *18.0(0.00) & *10.9(0.02) & *11.1(0.03) & *17.6(0.30) & *17.7(0.21) & \textbf{44.7(0.42)}\\
    0.9997 & *10.0(0.00) & *15.5(0.14) & *10.7(0.22) & *10.6(0.07) & *22.4(0.00) & *67.5(0.43) & \textbf{90.4(0.67)}\\
    0.9995 & *10.1(0.03) & *14.5(0.28) & *11.6(0.05) & *11.7(0.19) & *34.7(0.25) & *85.4(0.47) & \textbf{92.17(0.71)}\\
    0.999  & 11.8(0.7) & 18.2(1.2) & 15.7(1.2) &59.5(2.3) & 76.8(1.6) & *92.8(0.27) & \textbf{95.0(0.52)}\\
    0.995  & 79.5(0.1) & 72.5(0.9) & 89.8(2.0) & 94.0(0.6) & 93.9(0.1) & \textbf{*97.8(0.14)} & \textbf{97.5(0.03)}\\
    0.99   & 90.8(0.3) & 86.0(0.2) & 96.9(0.1) & 94.8(0.3) & 96.3(0.2) & \textbf{*97.7(0.20)} & \textbf{98.1(0.10)}\\
    0.95   & 97.3(0.2) & 96.4(0.1) & \textbf{98.6(0.1)} & \textbf{98.3(0.1)} & 98.1(0.0) & \textbf{*98.7(0.03)} & \textbf{98.8(0.11)}\\
    0.9    & 98.2(0.1) & 94.0(0.3) & \textbf{98.8(0.1)} & \textbf{98.7(0.0)} & 98.5(0.1) & \textbf{*99.1(0.09)} & \textbf{98.9(0.10)}\\
    \bottomrule
  \end{tabular}
  \begin{tablenotes}
   \item[$a$] The correlation between the background color and data features, it becomes more difficult to distinguish between them when the correlation is higher.
  \end{tablenotes}
\end{threeparttable}
\end{table*}

\subsubsection{Bias Mitigation}
\textcolor{black}{In traditional supervised contrastive learning approaches, such as SupCon, the use of augmentations from the same class as positives can inadvertently lead to the creation of shortcuts (Fig.~\ref{fig:mislead_aug}). This occurs because the model may start to rely on simple, potentially spurious patterns that are common among augmentations of different original images within the same class, rather than learning a more robust and general representation. For instance, if certain features like texture or background are more prevalent in the augmentations of one class, the model might prioritize these easy-to-learn features over more complex but crucial features that genuinely define the class. This reliance can degrade the model’s generalization ability as it may perform well on similar seen data but poorly on unseen, varied data from the same class.}

Driven by this imperative, we seek to rigorously assess the proficiency of our ConTeX method in the context of bias mitigation. The overarching concern of bias in machine learning models propels us to ascertain that ConTeX not only delivers superior performance but also adheres to ethical considerations by actively reducing biases.

In pursuit of this, we have meticulously crafted and conducted three separate experiments. The primary aim is to evaluate ConTeX's prowess in counteracting biases, setting its performance against contemporary state-of-the-art techniques. To ensure a holistic assessment, we have chosen three distinct datasets, each renowned for its intrinsic biases: BiasedMNIST~\cite{bmnist}, UTKFace~\cite{utkface}, and CelebA~\cite{cleleba}. These experiments are designed to underscore the resilience and ethical integrity of the ConTeX methodology when applied to real-world challenges.

To measure the effectiveness of our approach in debiasing tasks, we adopt the unbiased accuracy metric~\cite{bahng_learning_2020}. In scenarios where bias labels are available, our model is trained by leveraging these categorical bias labels. Conversely, in experiments devoid of explicit bias labels, we operate under the assumption that the model possesses inherent knowledge of these labels. In such cases, we exclusively utilize the target label to train the model on biased data, ensuring its adaptability and versatility.

\paragraph{Evaluating Bias Label Performance on BiasedMNIST}

BiasedMNIST~\cite{bmnist} is a modified version of the MNIST dataset, characterized by a background color that exhibits a strong correlation with the data sample. Drawing parallels with the study in~\cite{hong_unbiased_2021}, we employed correlation values, denoted as $\rho$, ranging from ${0.9, 0.95, 0.99, 0.995, 0.999, 0.9995, 0.9997, 0.9999}$. These values signify the degree of association between the background hue and the inherent data attributes.

In Table.~\ref{BMNIST} and Fig.~\ref{fig:bias_label_BMNIST}, we juxtaposed our unbiased accuracy results against contemporary methodologies that also harnessed the bias label during the training phase. A discernible observation is that our approach consistently outperforms across varying correlation metrics. As the correlation between the background color and data features strengthens, it becomes more difficult to distinguish between them during training. Despite this challenge, our method's performance advantage over other techniques becomes even more evident. It is noteworthy that even at a stringent correlation of $\rho = 0.9997$ (where merely 2 out of 5000 training samples lack a correlated background), our technique sustains an accuracy exceeding $90\%$.

\paragraph{Evaluating Bias Label Performance on Real-World Datasets}

\begin{table*}[h]
  \caption{Unbiased accuracy (std) on the UTKFace dataset~~\cite{utkface}. The best results are shown in bold.}
  \label{table:utkface}
  \centering
  \begin{tabular}{ccccccc}
    \toprule
    Bias& Vanilla    & LNL~\cite{lnl}& DI~\cite{di}  & EnD~\cite{end} & BiasCon~\cite{hong_unbiased_2021}  & \textbf{Ours} \\
    \midrule
    Race & 87.4 (0.15) & 87.0 (0.15) & 88.7 (0.50) & 87.6 (0.29)  & 89.9 (0.69)                   & \textbf{90.8 (0.51)}\\
    \midrule
    Age & 72.0 (0.30) & 71.94 (0.51) & 77.0 (1.20) & 71.8 (0.14)   &  75.84 (0.12)                & \textbf{79.0 (1.07)}\\
    \bottomrule
  \end{tabular}
\end{table*}
\begin{table*}[h]
  \caption{Unbiased accuracy (std) on the CelebA dataset~\cite{cleleba}. The best results are shown in bold.}
  \label{table:celeba}
  \centering
  \begin{tabular}{ccccccc}
    \toprule
    Target& Vanilla    & LNL~\cite{lnl}& DI~\cite{di}  & EnD~\cite{end} & BiasCon~\cite{hong_unbiased_2021}  & \textbf{Ours} \\
    \midrule
    Blonde & 80.8 (3.18) & 80.6 (3.53) & 91.0 (0.88) & 80.2 (2.63) & 90.5 (0.35)   & \textbf{91.0 (0.94)}\\
    \midrule
    Makeup & 77.6 (0.93) & 76.6 (0.86) & 76.7 (2.54) & 76.9 (1.99)  &  76.3 (0.75)    & \textbf{83.3 (1.19)}\\
    \bottomrule
  \end{tabular}
\end{table*}

We extended our evaluation to encompass real-world image datasets, specifically UTKFace~\cite{utkface} and CelebA~\cite{cleleba}. For the UTKFace dataset, our experimental design mirrored the approach outlined in~\cite{hong_unbiased_2021}. Here, we trained the model on a binary classification task using \textbf{Gender} as the label, while \textbf{Age} and \textbf{Race} served as bias attributes.

Turning our attention to the CelebA dataset, we set up binary classification tasks with \textbf{HeavyMakeup} and \textbf{BlondHair} as distinct target labels. In both cases, the \textbf{Male} attribute was employed as the bias, aligning with the methodology of~\cite{lff}. For both datasets, we maintained a correlation value of 0.9 between the target and bias labels.

As evidenced in Table.~\ref{table:utkface}, our method consistently delivered robust performance across varied bias attributes. This trend of superior results is further corroborated by the data in Table.~\ref{table:celeba}, which showcases our method's consistent edge across different target tasks. It is worth noting that while BiasCon~\cite{hong_unbiased_2021} relies on SupCon~\cite{supcon} as its foundational loss function, the distinguishing factor in our approach is our novel loss function. This distinction underscores the significant contribution and efficacy of our proposed loss function.

\begin{table*}[h]
\centering
\begin{threeparttable}
  \caption{Unbiased accuracy (std) on the BiasedMNIST dataset without bias labels on training stage. The results with \textbf{ *} are the accuracy that we reimplemented. Others are based on study~\cite{hong_unbiased_2021}. Results not significantly worse than the best (p > 0.05, permutation test) are shown in bold.}
  \label{BMNIST_soft}
  \centering
  \begin{tabular}{m{1cm}ccccccc}
    \toprule
    Correlation$^a$& Vanilla   & LM~\cite{LM} & RUBi~\cite{rubi}& ReBias~\cite{bahng_learning_2020}& LfF~\cite{lff} & SoftCon~\cite{hong_unbiased_2021}  &  \textbf{Ours} \\
    \midrule
    0.999 & 11.8(1.1) & 10.5(0.6) & 10.6(0.5) & 26.5(1.4)  & 15.3(2.9) & *65.6(2.00) & \textbf{74.0(1.32)}\\
    0.997  & 57.2(0.9) & 56.0(4.3)  & 49.6(1.5) &65.8(0.3) & 63.7(20.3) & *87.2(0.60) & \textbf{93.1(0.63)}\\
    0.995  & 74.5(1.4) & 80.9(0.9) & 71.8(0.5) & 75.4(1.0) & 90.3(1.4)& *93.38(0.40) & \textbf{95.8(0.97)}\\
    0.99   & 88.9(0.2) & 91.5(0.4) & 85.9(0.1) & 88.4(0.6) & 95.1(0.1) & *96.6(1.37) & \textbf{97.2(0.25)}\\
    0.95   & 97.1(0.0) & 93.6(0.5) & 96.6(0.1) & 97.0(0.0) & 97.7(0.2) & \textbf{*98.7(0.56)} & \textbf{98.8(0.08)}\\
    0.9    & 98.2(0.1) & 89.5(0.7) & 97.8(0.1) & 98.1(0.1) & 96.1(1.1) & \textbf{*99.0(0.93)} & \textbf{99.0(0.25)}\\
    \bottomrule
  \end{tabular}
  \begin{tablenotes}
   \item[$a$] The correlation between the background color and data features, it becomes more difficult to distinguish between them when the correlation is higher.
  \end{tablenotes}
  \end{threeparttable}
\end{table*}
\paragraph{Evaluating without bias label on BiasedMNIST}
In many real-world scenarios, bias labels might not be available or not in categories. Models that can perform well without explicit bias labels are more versatile and can be applied in a broader range of situations. In order to evaluate the generalization of our proposed loss function, we trained the data from BiasedMNIST with different correlations. To assess the generalizability of our proposed loss function, we subjected the BiasedMNIST dataset to training across varying correlation levels, notably without incorporating explicit bias category data. The results, as delineated in Table.~\ref{BMNIST_soft}, are telling. Our method consistently outperforms the established baselines across all correlation levels. Notably, even in high correlation scenarios where biases are most pronounced and challenging to mitigate, our approach demonstrates superior accuracy. This underscores the efficacy and resilience of our proposed method, highlighting its potential as a benchmark solution for bias mitigation, especially in contexts where bias labels are unavailable.

\paragraph{Implications of ConTeX Loss on Model Generalization and Fairness}

\textcolor{black}{The ConTeX loss mitigates these issues by incorporating both self-positives and a broader array of negatives, including self-negatives and context-specific negatives (detailed in Fig.~\ref{fig:framework}). This setup compels the model to differentiate not just between different classes but also within the same class across different contexts and manifestations. By ensuring that self-positives (augmentations from the same original image) are closer in representation than context-positives (augmentations from different original images within the same class), ConTeX pushes the model to learn more intrinsic and discriminative features of each sample rather than superficial or common ones.}

\textcolor{black}{This nuanced approach to learning representations helps in reducing reliance on shortcuts, thereby enhancing the model’s ability to generalize across varied instances of a class. It also promotes fairness by ensuring that the model does not disproportionately learn to recognize features from dominant subgroups within the class, thus better handling intra-class variations.}

\begin{table}[h]
  \caption{Evaluation of NT-Xent loss on networks pretrained with labels. The results highlight the adverse effects of label bias, where a larger loss value indicates a more pronounced impact.}
  \label{ntxent pretrained eval}
  \centering
  \begin{tabular}{ccccc}
    \toprule
    Pretrained Dataset  &\multicolumn{2}{c}{SupCon~\cite{supcon}}&\multicolumn{2}{c}{Ours} \\

     &train&test&train&test\\
     \midrule
    CIFAR10 & 8.953 &6.05 &\textbf{6.400}&\textbf{3.660}\\
    CIFAR100& 2.435 &4.17&\textbf{1.362}&\textbf{2.374}\\
    ImageNet& \textbf{1.022 }&1.182&1.090&\textbf{1.155}\\
    \bottomrule
  \end{tabular}
\end{table}

\subsection{Misleading by the labels}

\begin{figure*}[th]
\centering
\includegraphics[width=.25\textwidth]{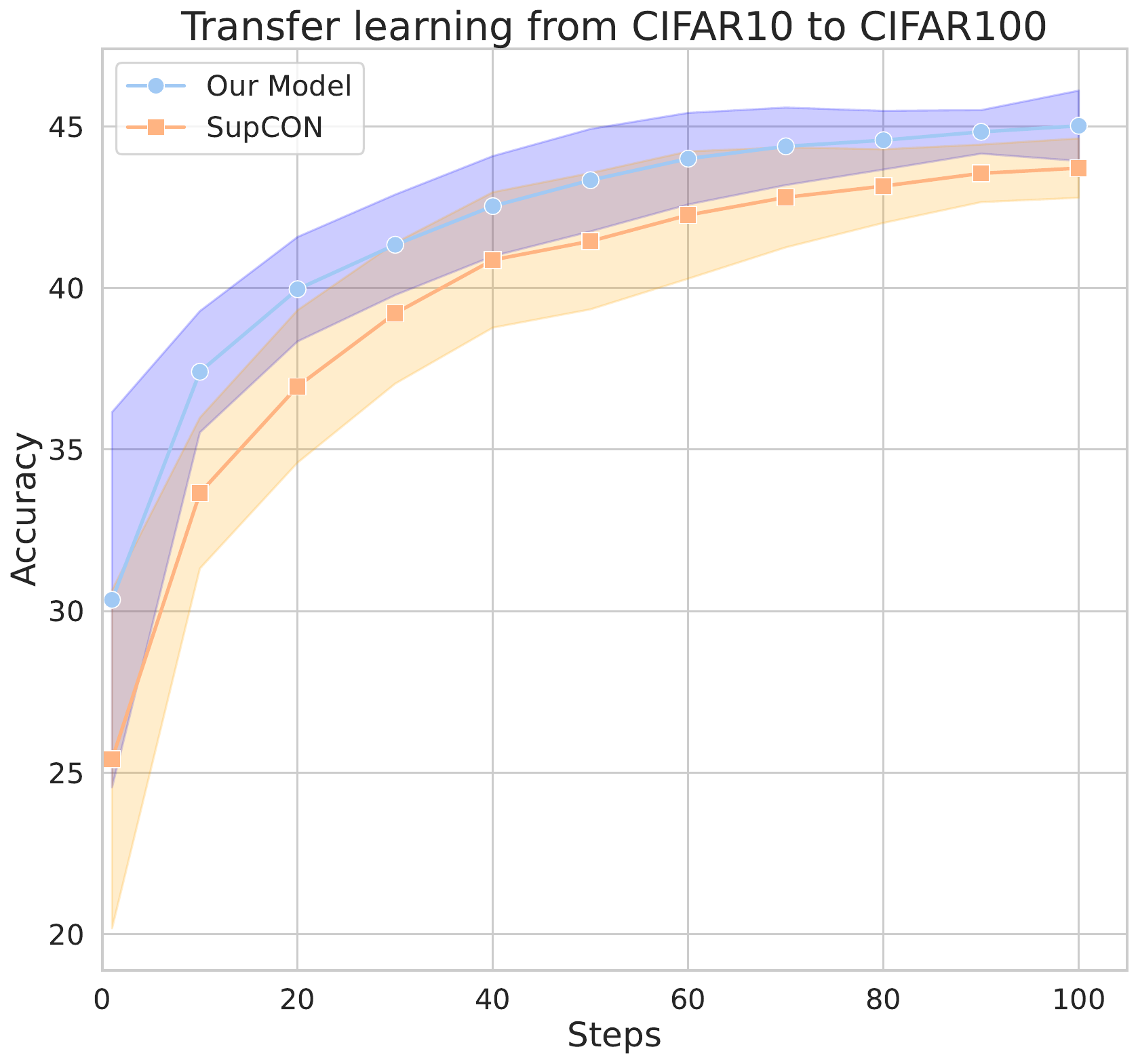}\hfill
\includegraphics[width=.25\textwidth]{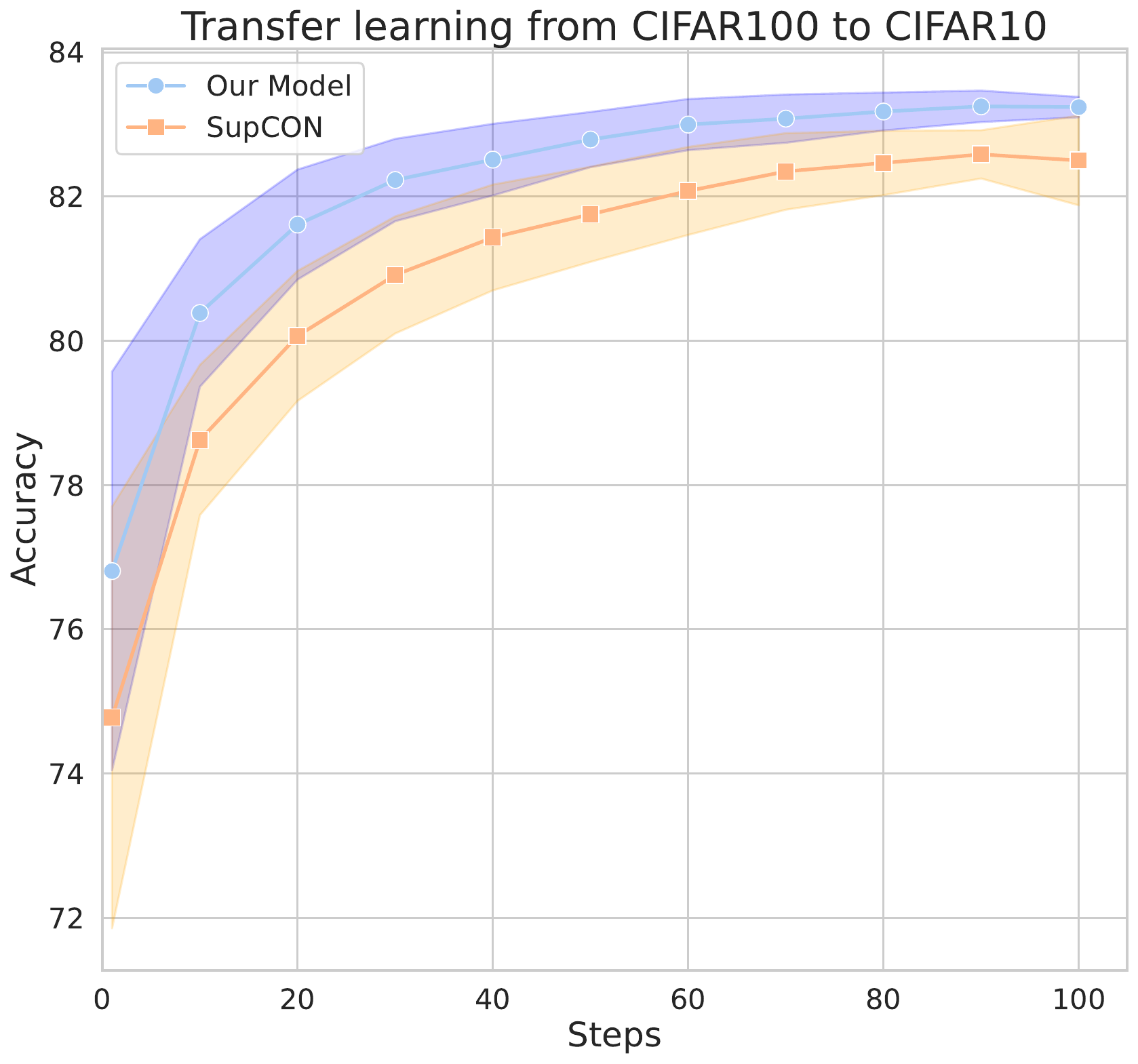}\hfill
\includegraphics[width=.25\textwidth]{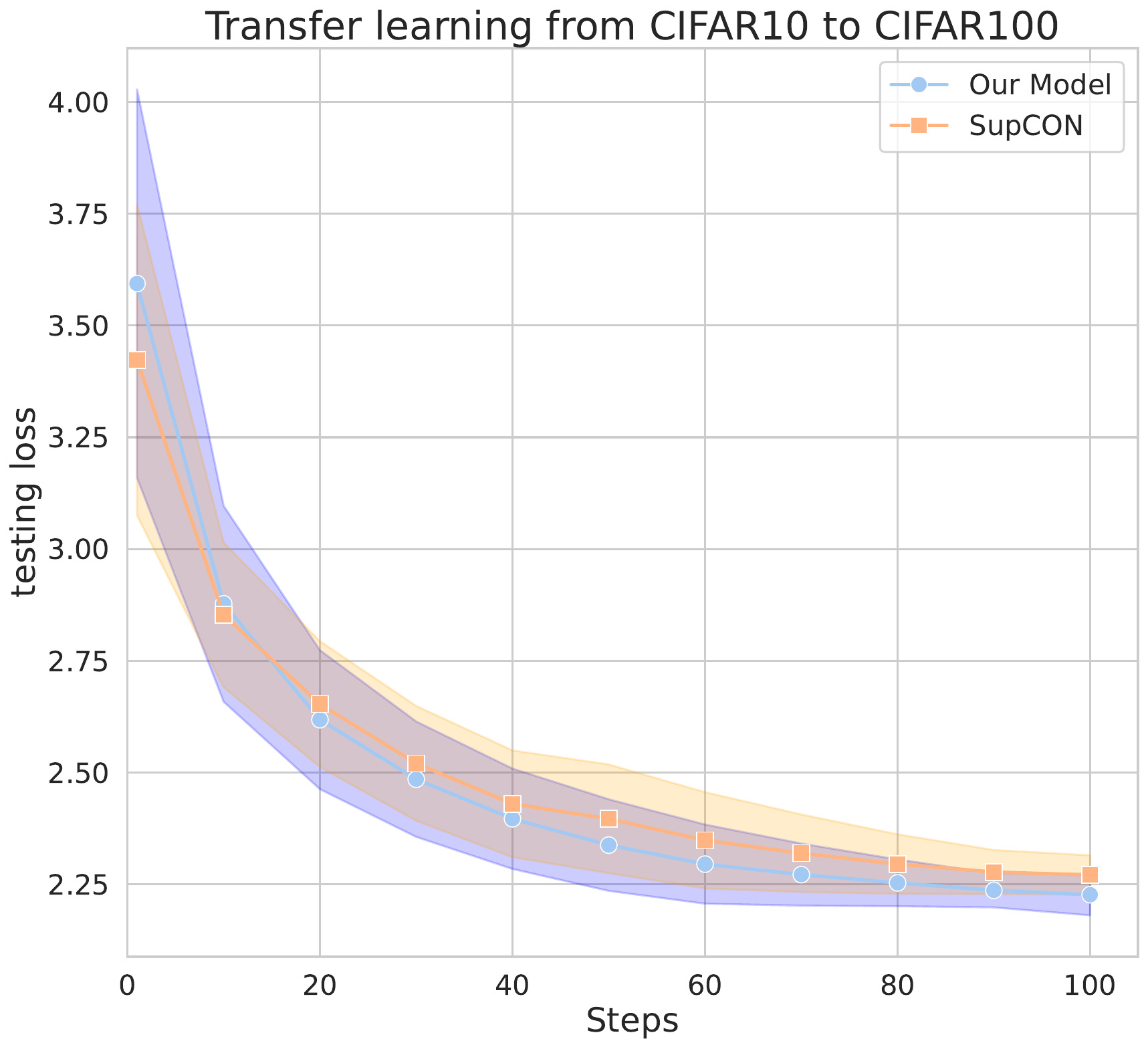}\hfill
\includegraphics[width=.25\textwidth]{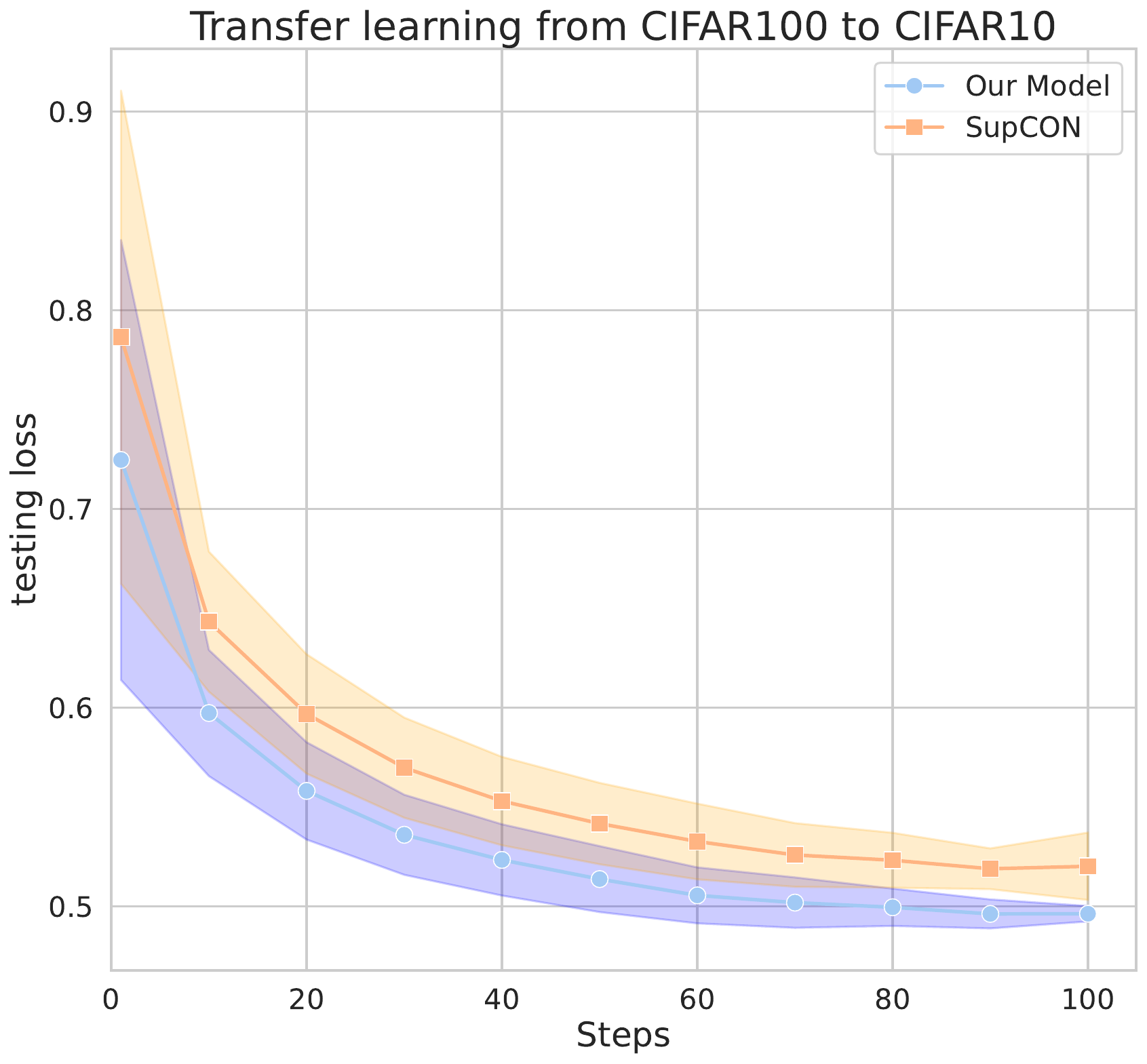}
\caption{Finetuning evaluation accuracy of transfer learning between CIFAR10 and CIFAR100. The blue line represents the ConTeX loss function and the orange line represents the SupCon loss function. We first pretrained the ResNet-50 by both the loss function and both datasets (CIFAR10 and CIFAR100) for 1000 epochs. Then we finetune the network with the other dataset for 100 epochs.}
\label{fig:transferLearning}
\end{figure*}

As we mentioned in the last section, many biases are uncategorizable. Besides the faster convergence speed compared to unsupervised contrastive learning, this section will discuss the potential biases as a side effect of supervised contrastive learning. 
As evidenced in Fig.~\ref{fig:mislead_aug}, post-random augmentation, the last three image groups appear strikingly similar. The present supervised contrastive loss function, which lacks a boundary definition for learning similarities among samples of the same label class, could mislead the network via `shortcuts' engendered by the augmentation process. This becomes particularly plausible in both datasets with smaller image sizes and datasets with large amounts of data in the same category, such as CIFAR10, CIFAR100 and ImageNet. To substantiate this theory, we have designed two experimental setups (Table.~\ref{ntxent pretrained eval} and Fig.~\ref{fig:transferLearning}), demonstrating the discrepancy between our loss function, which circumvents this misleading phenomenon as per Eq.~\ref{loss part 2}, and the SupCon loss function.

In our first experiment, we directly evaluated the CIFAR10, CIFAR100 and ImageNet pretrained networks using unsupervised NT-Xent loss from SimCLR~\cite{SimCLR,NT-Xent}. A high loss value signifies that the pair of images, randomly augmented from the same source image, are predicted as dissimilar (indicating a greater distance in latent space). As illustrated in Table.~\ref{ntxent pretrained eval}, the evaluation loss values for CIFAR10 and CIFAR100, when derived from SupCon, are considerably elevated compared to our approach for both training and testing datasets. This indicates that our method effectively curtails the model's reliance on label shortcuts.

To delve deeper into the tangible repercussions of label shortcuts, we devised a transfer learning experiment using CIFAR10 and CIFAR100. We postulate that training influenced by shortcuts can introduce bias, negatively affecting the model's generalization capability. Fig.\ref{fig:transferLearning} aligns closely with the findings in Table.~\ref{ntxent pretrained eval}, further reinforcing the notion that our ConTeX loss function offers superior generalization capabilities and underscores the detrimental effect of label shortcuts.

\begin{table}[h]
    
  \caption{Transfer learning accuracy comparison based on $32\times32$ size datasets and $224\times224$ datasets. All models are ResNet-50. The first two models are pretrained by ImageNet$32\times32$ dataset~\cite{imagenet32} for 100 epochs and then finetuned on CIFAR10 and CIFAR100 datasets. The second two models are pretrained by Imagenet~\cite{imagenet2012} for 100 epochs and finetuned on Caltech-101~\cite{caltech101} and Caltech-256~\cite{caltech256} datasets. The Caltech datasets' performance are calculated by mean-per-class accuracy.}
  \label{transfer_imagenet32}
  \centering
  \begin{tabular}{cccc}
    \toprule
    Pretrain Dataset & Finetune Dataset &SupCon~\cite{supcon} &Ours\\
    \midrule
    ImageNet-32& CIFAR 10& 94.42& \textbf{95.20} \\
    ImageNet-32& CIFAR 100 & 76.32 & \textbf{78.21}\\
    ImageNet & Caltech-101 & 85.55 & \textbf{85.95}\\
    ImageNet & Caltech-256 & 68.49 & \textbf{76.82}\\
    \bottomrule
  \end{tabular}
\end{table}

\subsection{Transfer learning}

To further substantiate the superior generalization capabilities of the ConTeX loss function, particularly in the context of both small image-size datasets and large categorical-size datasets, we carried out another transfer learning experiment. We initiated the pretraining process using the ImageNet$32\times32$ dataset~\cite{imagenet32}, which matches the ImageNet ILSVRC-2012~\cite{imagenet2012} in terms of the total number of data samples and classes. Subsequently, we finetuned the models on CIFAR10 and CIFAR100 datasets after pretrained on ImageNet$32\times32$. The linear evaluation outcomes on the ImageNet$32\times32$ dataset vouch for the efficacy of ConTeX on larger datasets (Table.~\ref{transfer_imagenet32}). Meanwhile, the considerable performance gap during transfer learning across different datasets further highlights the exceptional generalization capacity of our approach.

\subsection{Training details}
All our experiments are trained with three RTX 6000 graphic cards under the PyTorch environment. For performance comparison, we proposed 1000 epochs on pretraining on CIFAR10 and CIFAR100 datasets with a batch size of 2048. We proposed 100 epochs on pretraining the ImageNet dataset with a batch size of 1440 employing mixed precision (combining 16-bit and 32-bit floating-point types) to facilitate larger batch sizes and accelerate computations. All the experiments are trained with the ResNet-50($1\times$) backbone. Similar to SupCon~\cite{supcon}, we use $\tau= 0.1$ as temperature. We use $\lambda = 0.7$ in our proposed loss function for all the experiments. We incorporated a two-layer Multilayer Perceptron (MLP) as the projection head. During the linear evaluation phase, we discarded the projection head and fine-tuned a linear layer, freezing all other parameters. For the experiment in Table.~\ref{ntxent pretrained eval}, we stop updating the parameters and only calculate the NT-Xent loss with different random augmented input images. For the CIFAR10 and CIFAR100 transfer learning experiments, we reused the hyperparameters from the performance comparison experiment, employing a batch size of 256 during the finetuning phase. Further results and hyperparameter details can be found in the Supplementary section. The associated code will be made publicly accessible upon acceptance of this paper.

\subsubsection{Pretraining stage}
In the pretraining phase with the ImageNet dataset, we leveraged a batch size of 1440, a learning rate of 0.5, and training to 100 epochs. When working with smaller image-size datasets, such as CIFAR10 and CIFAR100, we employed a batch size of 2048, a learning rate of 1, and pretrained for 1000 epochs. Across all pretraining experiments, we implemented a cosine learning rate decay and a temperature parameter of 0.1. Our chosen optimizer was SGD, with a momentum parameter set at 0.9.
\subsubsection{Linear evaluate and finetune}
During the linear evaluation phase, we used a batch size of 512 with a learning rate of 0.2 (calculated as $0.1 \times \text{BatchSize} / 256$), over a training period of 100 epochs. Though in typical scenarios, a duration of 30 epochs should suffice. For the finetuning phase, we set a batch size of 256 with a learning rate of 0.2 (calculated as $0.05 \times \text{BatchSize} / 256$) and extended the training to 350 epochs. The chosen optimizer remained SGD, maintaining a momentum parameter of 0.9 for both stages.

\section{Comparative Analysis and Ablation Study}
\subsection{Convergence speed experiments}

\begin{figure}[h]
\centering
\includegraphics[width=.24\textwidth]{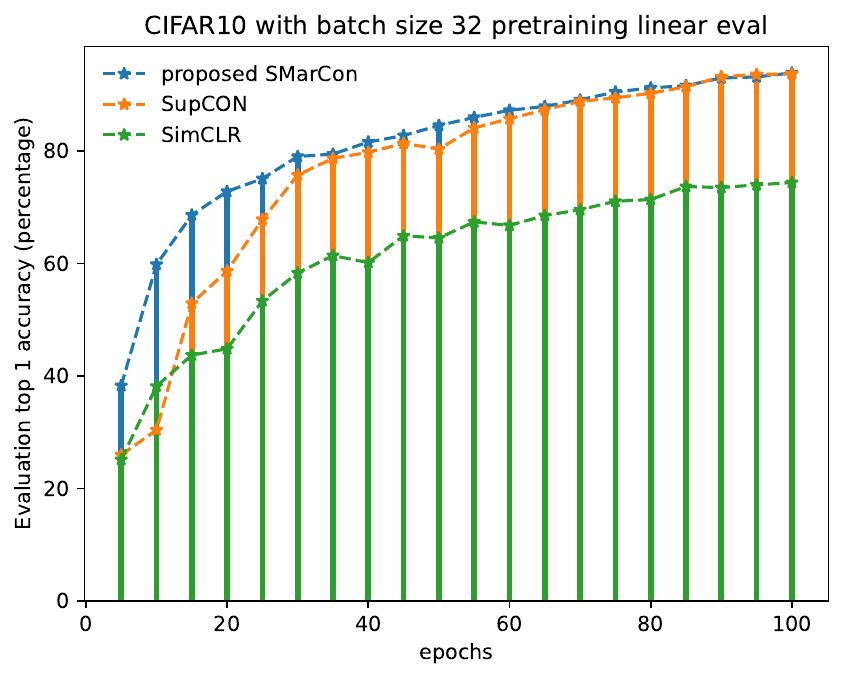}\hfill
\includegraphics[width=.24\textwidth]{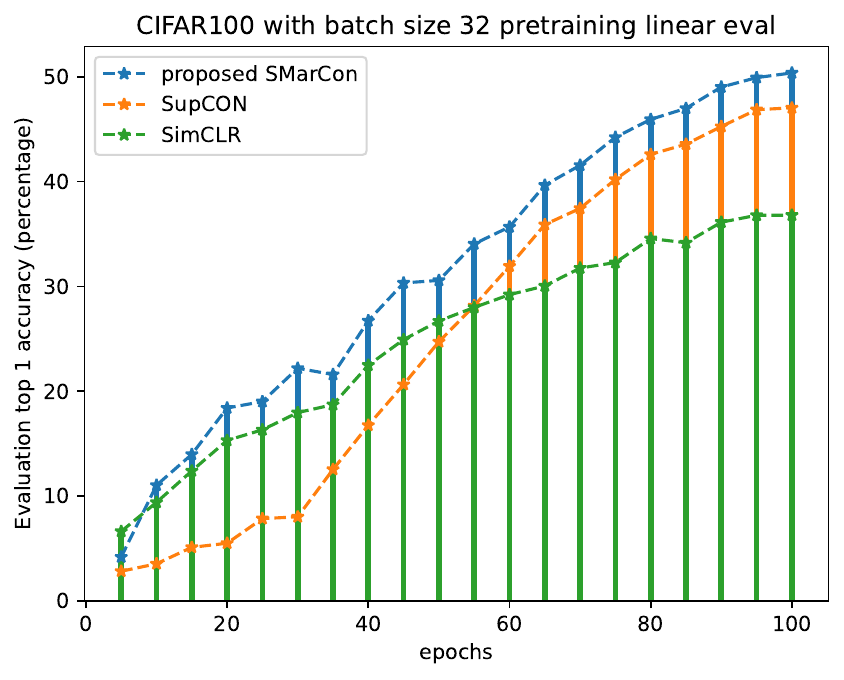}\hfill
\caption{Comparison of linear evaluation accuracy on different training epochs on ConTeX(SMarCon), SupCon~\cite{supcon} and SimCLR~\cite{SimCLR} with a batch size of 32 on CIFAR10 and CIFAR100 datasets. Noting that the lower accuracy on the CIFAR100 dataset is due to the batch size being significantly smaller than the number of classes, a typical challenge encountered in supervised contrastive learning.}
\label{fig:convergence}
\end{figure}

\begin{table*}[!th]
  \caption{Linear evaluation on different weight parameter $\lambda$. All models are pretrained for 100 epochs on the CIFAR10 and CIFAR100 datasets. The other hyperparameters are the same. When $\lambda = 0.0$, the loss function is totally unsupervised, when $lambda = 1.0$, the loss function is fully supervised.}
  \label{weight accuracy}
  \centering
  \begin{tabular}{cccccccccccc}
    \toprule
    Weight($\lambda$)& 0.0 & 0.1 & 0.2 & 0.3 & 0.4 &0.5 & 0.6 & 0.7 & 0.8 & 0.9 & 1.0\\
    \midrule
    CIFAR10&76.6&84.66&90.35&91.84&92.48&92.78&93.68&93.42&92.88&92.26&91.95\\
    CIFAR100&49.25&55.79&61.82&67.67&69.73&72.23&73.53&73.61&73.76&73.12&73.24\\

    \bottomrule
  \end{tabular}
\end{table*}

Converging with a small batch size is always challenging for contrastive learning due to limited negative samples. However, faster convergence can train more robust models and manage larger datasets with limited training resources,  it also accelerates the prototyping process. This rapid cycle of development facilitates the swift refinement of models, making experimentation and innovation efforts far more efficient and effective. Our investigation into various contrastive learning methods with limited small batch sizes demonstrated that our proposed method converges significantly more quickly than SupCon in the early stage of training, as illustrated in Fig.~\ref{fig:convergence}. In the first part of our loss function (Eq.~\ref{loss part 1}), we capitalized on the contrast between positive and negative pairs. The divergence between our proposed ConTeX loss and the conventional SupCon loss progressively widens with an increasing number of training epochs. Our ConTeX loss function not only bolsters the learning process by adeptly differentiating between positive and negative sample pairs, but it also accelerates the speed of convergence markedly. This enhanced convergence efficiency exemplifies the ConTeX loss function's competence in harnessing the intrinsic contrastive potential of the data, thereby accelerating the learning direction of the model. It is worth noting that this accelerated convergence does not compromise the model's performance, affirming the balance between speed and accuracy maintained by the ConTeX loss function. In our CIFAR10 dataset experiment, both our proposed contrastive learning method (ConTeX) and SupCon achieved an accuracy of 93.7\% at epoch 100. In contrast, SimCLR, due to its small batch size with limited negative samples and lack of label information, showed a notable performance gap. The CIFAR100 dataset experiment posed a greater challenge for supervised contrastive frameworks, as fewer positive pairs were available for each of the 100 classes, with a batch size of only 32. This predicament was evident in the performance difference between SimCLR and SupCon; the SupCon framework managed to match the accuracy of the self-supervised learning framework on epoch 50. In contrast, our framework consistently demonstrated superior performance throughout almost the entire training period. Additionally, the left portion of Fig.~\ref{fig:stdmean} depicts our proposed method's mean performance across different batch sizes, which further evidences its propensity for faster convergence regardless of the number of positive pairs available.

\begin{figure}[h]
\centering
\includegraphics[width=.24\textwidth]{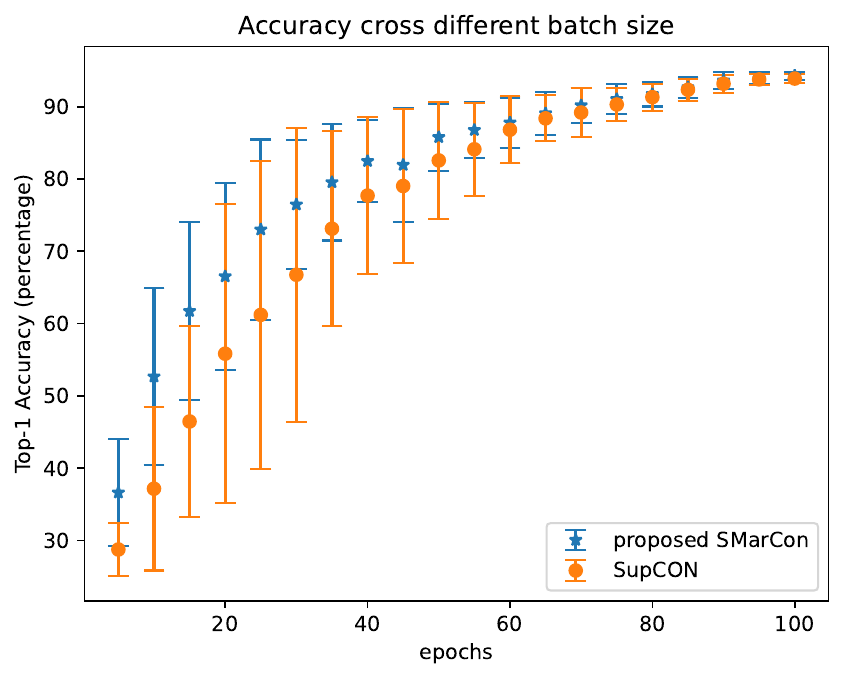}\hfill
\includegraphics[width=.24\textwidth]{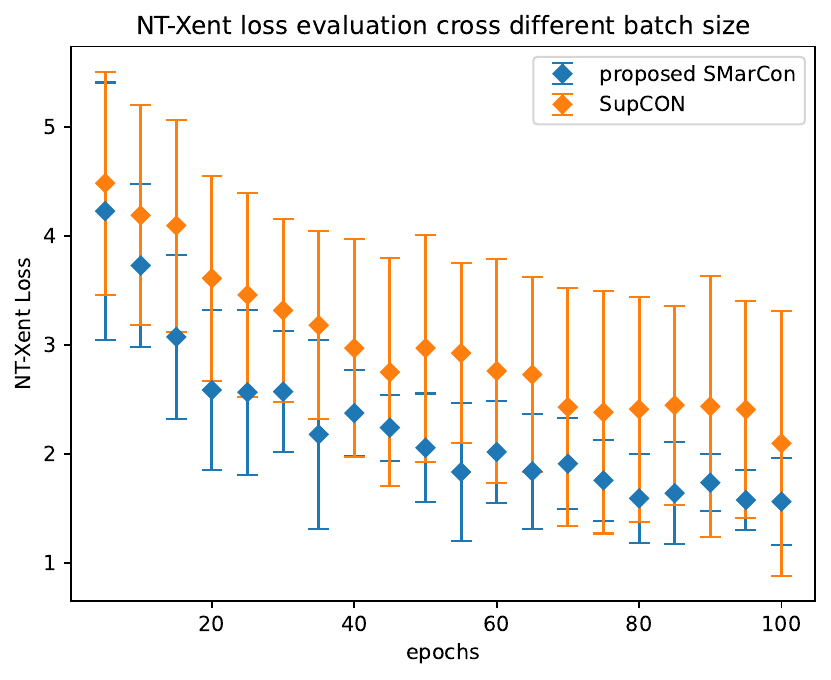}\hfill
\caption{\textbf{Left:} The standard deviation and mean linear evaluation accuracy cross different batch sizes (16, 24, 32, 64, 128, 256) during 100 epochs pertaining. \textbf{Right:} The standard deviation and mean NT-Xent loss~\cite{NT-Xent} cross different batch sizes (16, 24, 32, 64, 128, 256) during 100 epochs pretraining.}
\label{fig:stdmean}
\end{figure}

\subsection{NT-Xent loss evaluation implementation}
In this part, we aimed to use this robust loss function to detail the distance between the anchor and the \textit{self positive} sample ($\Tilde{x_{ni}}$ and $\Tilde{x_{nj}}$). The NT-Xent loss~\cite{SimCLR} is an efficient method to calculate the distance across the whole batch data samples. On the right part of Fig.~\ref{fig:stdmean}, we can observe the NT-Xent loss reveals a downward trend in both methods over time in the right section of Fig.~\ref{fig:stdmean}. This trend indicates that both supervised contrastive methods are progressively extracting knowledge from both the context and inherent features within the data samples. Throughout the entire pretraining phase, ConTeX consistently maintains a lower self-contrast (NT-Xent) loss compared to SupCon, further demonstrating its effectiveness.

\subsection{Analysis of Weight Parameter ($\lambda$)}

This section presents an ablation study examining the impact of varying the weight parameter, $\lambda$, on our combined loss function. In Table.~\ref{weight accuracy}, the model exclusively employs the unsupervised component of our loss function (Eq.~\ref{loss part 2}) at $\lambda = 0.0$. At $\lambda = 1.0$, the model relies solely on the supervised component of our loss function (Eq.~\ref{loss part 1}), maximizing the use of context information. From the results on both datasets, it evidents that a $\lambda$ value between 0.6 and 0.8 yields optimal performance. This suggests a balanced contribution from both supervised and unsupervised components of our loss function is beneficial.

\subsection{Effect of Batch size}
\begin{table}[h]
  \caption{Linear evaluation on different batch sizes. All models are pretrained for 100 epochs on the CIFAR10 dataset. The hyperparameters are the same across two different loss functions.}
  \label{bs accuracy}
  \centering
  \begin{tabular}{ccccccc}
    \toprule
    Batch size& 16 &24&32&64& 128&256\\
    \midrule
    SupCon~\cite{supcon}&\textbf{93.01}&92.77&93.80&94.27&94.66&94.68\\
    Ours& \textbf{93.02}&\textbf{93.08}&\textbf{94.07}&\textbf{94.61}&\textbf{94.93}&\textbf{95.08}\\

    \bottomrule
  \end{tabular}
\end{table}

We also delved into the impact of varying batch sizes during the pretraining phase. As demonstrated in Table.~\ref{bs accuracy}, accuracy generally trends upward as batch size increases, revealing that both loss functions benefit from larger batch sizes. Despite this, our proposed loss function consistently outperforms SupCon with relatively smaller batch sizes.

Further clarity can be gleaned from Fig.~\ref{fig:stdmean}, where both sections highlight the high robustness and stability of our proposed loss function, indicated by lower standard deviations. These standard deviations are derived from results across different batch sizes within the same training epoch.

In the left section of Fig.~\ref{fig:stdmean}, the standard deviations of our method are closely clustered around the mean, unlike those of SupCon. This suggests that our method is less influenced by batch size variation during the pretraining period.

Looking at the right section of Fig.~\ref{fig:stdmean} we observe significantly lower standard deviations for our loss function. This underlines its consistent learning performance, which is not influenced by batch size. This evidence shows the stability of our loss function in comprehending the internal information from data samples irrespective of batch size.

\section{Conclusion}
We have presented our novel Context-enriched Contrastive (ConTeX) loss function, which harnesses the power of supervised contrastive learning while addressing its limitations. By introducing two constraints and focusing on both context(label) and self-positive pairs, our method has significantly improved the efficiency and generalization capabilities of contrastive learning models. Furthermore, we highlighted and proved the impact of learning from shortcuts resulting from current supervised contrastive learning frameworks, which our method effectively mitigates. This underscores the value of our approach in preventing potential biases and improving the robustness of the model. Our study is an important step towards learning the `fairness' feature from data. As part of our future work, we aim to explore additional ways to further enhance the performance and generalization capabilities of our ConTeX loss function. We hope that our findings will inspire further research and advancements in the field of contrastive learning.

\bibliographystyle{unsrt}
\small
\bibliography{reference}

\begin{IEEEbiography}[{\includegraphics[width=0.95in,height=1.25in, clip, keepaspectratio]{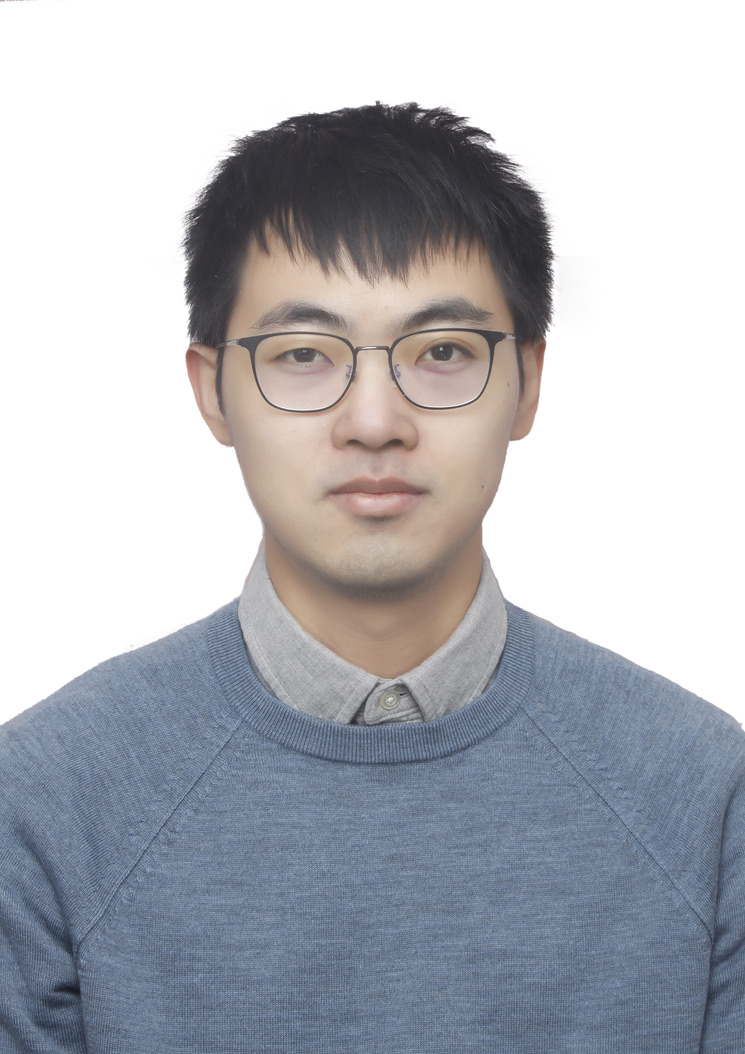}}]{Haojin Deng} received his bachelor's degree (with honours) in Computer Science from Carleton University, Canada in 2020 and M.Sc degree in Computer Science with Specialization in Artificial Intelligence from Lakehead University in 2022. 

He is currently a Ph.D. candidate in Electrical and Computer Engineering, Western University, Canada. He is also a Graduate Student Researcher in Vector Institute, Canada. His current research interests include brain-computer interface, self-supervised learning and representation learning. He is a reviewer for the IEEE Transactions on Cybernetics, and IEEE Transactions on Circuits and Systems for Video Technology.
\end{IEEEbiography}

\begin{IEEEbiography}[{\includegraphics[width=0.95in,height=1.25in]{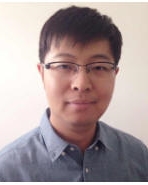}}]{Yimin Yang}
(S’10-M’13-SM’19) received the Ph.D. degree in Pattern Recognition and Intelligent System from the College of Electrical and Information Engineering, Hunan University, Changsha, China, in 2013. 

He is currently an Assistant Professor with the Department of Electrical and Computer Engineering, Western University, Canada. He is also a Faculty Affiliate Member in Vector Institute, Canada. From 2014 to 2018, he was a Post-Doctoral Fellow with the Department of Electrical and Computer Engineering at the University of Windsor, Canada. His current research interests are artificial neural networks, image processing, and data analysis. 

Dr. Yang is an Associate Editor for the IEEE Transactions on Circuits and Systems for Video Technology, Cognitive Computation, and Neurocomputing.  He has been serving as a Reivewer for many international journals of his research field and a Program Committee Member of some international Conferences.  
\end{IEEEbiography}

\end{document}